\documentclass[conf]{new-aiaa}
\usepackage[utf8]{inputenc}
\usepackage{graphicx}
\usepackage{amsmath}

\usepackage{amssymb}
\usepackage{amsthm}
\usepackage{mathtools}
\usepackage{longtable,tabularx}
\usepackage{booktabs}
\newtheorem{theorem}{Theorem}
\newtheorem{lemma}{Lemma}
\newtheorem{corollary}[lemma]{Corollary}
\newtheorem{proposition}{Proposition}
\newtheorem{definition}{Definition}
\newtheorem{axiom}{Axiom}
\newtheorem{assumption}{Assumption}
\newtheorem{remark}{Remark}

\title{The Accountability Horizon: An Impossibility Theorem for Governing Human-Agent Collectives}

\author[1,2]{Haileleol Tibebu}
\author[1]{Hewan Shemtaga}
\affil[1]{University of Illinois at Urbana-Champaign, Champaign, IL, USA}
\affil[2]{Responsible Intelligence Institute, Addis Ababa, Ethiopia}

\begin{document}
\maketitle
\footnotetext[1]{Emails: \texttt{htibebu@illinois.edu}, \texttt{admin@rii.org} (H.~Tibebu); \texttt{hewan@illinois.edu} (H.~Shemtaga).}

\begin{abstract}
Existing accountability frameworks for AI systems (legal, ethical, and regulatory) presuppose that responsibility for consequential outcomes can be localised to identifiable human agents. We investigate, within a causal-information-theoretic framework, whether this presupposition can be sustained for agentic AI systems operating in joint human-AI configurations. We introduce \textbf{Human-Agent Collectives (HACs)}, a mathematical formalisation of sociotechnical systems in which humans and autonomous AI agents jointly exercise distributed agency. Agents are modelled as state-policy tuples within a shared structural causal model; autonomy is characterised through a four-dimensional information-theoretic profile (epistemic, executive, evaluative, and social autonomy); and collective behaviour is specified through interaction graphs and joint action spaces. We axiomatise legitimate \emph{single-locus} accountability through four minimal properties grounded in legal and philosophical precedent: Attributability (responsibility requires individual causal contribution), Foreseeability Bound (responsibility cannot exceed individual predictive capacity), Non-Vacuity (at least one agent bears non-trivial responsibility for every outcome type), and Completeness (responsibility must be fully allocated across agents for every outcome type). Our central result, the \textbf{Accountability Incompleteness Theorem}, proves that for any HAC whose agents conform to a mixture-model action structure and whose minimum compound autonomy exceeds a computable threshold $\hat{\Lambda}^*$ (the \emph{Accountability Horizon}) and whose interaction graph contains at least one directed cycle involving both human and artificial agents, no single-locus accountability framework can simultaneously satisfy all four axioms.  The impossibility is structural: it cannot be resolved by additional oversight, transparency, or audit mechanisms at a fixed autonomy level. Below $\hat{\Lambda}^*$, we construct explicit legitimate frameworks, establishing a sharp phase transition. Computational experiments on 3,000 synthetic HACs confirm all theoretical predictions. The result establishes a formal boundary, structurally reminiscent of impossibility results in social choice theory, below which current governance paradigms remain valid and above which alternative, distributed accountability mechanisms become necessary.
\end{abstract}

\textbf{Keywords:} human-agent collectives $\cdot$ accountability impossibility $\cdot$ agentic AI governance $\cdot$ distributed agency $\cdot$ formal governance theory $\cdot$ responsibility gap $\cdot$ autonomy-accountability trade-off $\cdot$ structural causal models $\cdot$ Completeness axiom

\section{Introduction}

Every accountability framework in the history of human institutions, from Roman tort law to the EU AI Act, rests on a shared foundational assumption: that for any consequential outcome, there exists at least one identifiable agent with sufficient causal involvement and epistemic access to bear meaningful responsibility. We call this the \textbf{\emph{Localisability Assumption}}. This paper proves, within a formal model of distributed human-AI agency, that agentic AI systems violate this assumption not as a contingent engineering limitation but as a mathematical necessity once agent autonomy exceeds a computable threshold.

The urgency of this result is driven by the rapid deployment of agentic AI, that is, systems that autonomously decompose goals, formulate multi-step plans, select tools, and adapt strategies based on environmental feedback. Multi-agent deployments compound this autonomy: when agents coordinate, delegate tasks, and jointly produce outcomes through interaction, collective behaviour can diverge substantially from any individual agent's design specification. Industry projections estimate that 40\% of enterprise applications will incorporate agent functionalities by 2026, with the global market growing at over 40\% annually \cite{gartner2025, bisi2026}.

The governance challenge this creates is qualitatively new. Consider a multi-agent system managing hospital resource allocation that autonomously reallocates ventilators based on its own prognostic models, overriding the triage protocol specified by physicians. The question \emph{who is accountable for the patient who dies} has no satisfactory answer within existing frameworks. The physician did not make the decision. The developer could not foresee this specific outcome. Assigning negligible responsibility to everyone satisfies neither legal standards nor moral intuition. We formalise a simplified version of this scenario as HAC $\mathcal{H}_1$ in Section~IV.A and show that its governance feasibility depends on the structural properties our framework identifies: the autonomy profiles of the AI agents and the topology of their interaction with human physicians.

The philosophical literature has identified this challenge qualitatively. Matthias~\cite{matthias2004} first articulated the ``responsibility gap'' for learning automata, arguing that autonomous learning undermines the epistemic conditions for responsibility ascription. Sparrow~\cite{sparrow2007} extended the argument to autonomous weapons. Santoni de Sio and Mecacci~\cite{santoni2021} decomposed the gap into four sub-types: culpability, moral accountability, public accountability, and active responsibility. K\"{o}nigs~\cite{konigs2022} questioned whether responsibility gaps are genuinely problematic. Tigard~\cite{tigard2021} proposed corrective responsibility as a forward-looking alternative. Danaher~\cite{danaher2016} identified a ``retribution gap'' specific to criminal law. Floridi and Sanders~\cite{floridi2004} proposed treating artificial agents as moral agents at appropriate levels of abstraction, while Coeckelbergh~\cite{coeckelbergh2009, coeckelbergh2014} developed relational approaches. Most recently, Noh~\cite{noh2025} provided empirical evidence that laypeople routinely attribute moral agency to AI while denying them consciousness, and Goetze~\cite{goetze2022} proposed ``moral entanglement'' as a vicarious responsibility framework.

Despite this rich analysis, the field lacks what every mature governance science requires: \emph{formal results establishing the boundaries of what governance can achieve}. Arrow's Impossibility Theorem~\cite{arrow1951} proved that no voting system can simultaneously satisfy a minimal set of fairness axioms. The Fischer-Lynch-Paterson impossibility~\cite{flp1985} proved that no deterministic consensus protocol can tolerate even one failure in an asynchronous system. The CAP theorem~\cite{brewer2000, gilbert2002} proved that consistency, availability, and partition tolerance cannot be jointly achieved. Each result transformed its field by replacing intuition with mathematical certainty about structural constraints. The AI governance field has no comparable result.

We are explicit about the scope of this analogy. Arrow's theorem holds for any preference aggregation function with no structural assumptions on the preference domain. Our result is more conditional: it holds within a specific formal model. The analogy is structural, in that we establish that a minimal set of governance axioms becomes unsatisfiable beyond a threshold, not universal. The strength of our result is that the model is general enough to subsume all current agentic AI architectures while being specific enough to yield computable predictions.

Existing governance frameworks, including Singapore's Model AI Governance Framework for Agentic AI~\cite{imda2026}, the EU AI Act~\cite{euaiact2024}, the KPMG Trusted AI Framework, and the WEF Presidio Framework~\cite{wef2025}, provide organisational checklists and procedural recommendations. These are valuable practical guides, but they are structurally unable to answer the foundational question: \emph{under what conditions is accountability governance even possible?} Without this answer, the field cannot distinguish between frameworks that fail because they are poorly designed and frameworks that fail because no design could succeed.

The formal gap has five dimensions. First, there is no mathematical definition of the systems being governed. Second, there is no formal characterisation of when accountability fails. Third, there is no formal connection between agent autonomy and governance requirements. Fourth, there is no impossibility result bounding what governance can achieve. Fifth, prior governance axiomatisations have not included a systemic completeness condition, the requirement that responsibility be fully allocated across agents for every outcome type, which we show is essential for the impossibility to be structurally grounded rather than a consequence of a single normative constraint.

This paper fills these gaps with two contributions.

\textbf{Contribution 1: Human-Agent Collectives.} We provide the first complete mathematical formalisation of joint human-AI systems, defining agents, autonomy profiles, interaction topologies, collective action spaces, and accountability frameworks within a unified structural causal model (Section~III.A).

\textbf{Contribution 2: The Accountability Incompleteness Theorem.} We axiomatise legitimate single-locus accountability through four minimal properties forming a complete normative basis: Attributability (causal grounding), Foreseeability Bound (epistemic constraint), Non-Vacuity (individual non-triviality), and Completeness (systemic exhaustiveness). These four axioms jointly capture the requirements that governance must be causally grounded, epistemically bounded, individually substantive, and systemically comprehensive; each is necessary, and together they are sufficient to derive the impossibility. We prove that for any HAC whose minimum compound autonomy exceeds the computable threshold $\hat{\Lambda}^* = 1 - 1/|C_{\min}|$ and whose interaction graph contains at least one mixed feedback cycle, no single-locus accountability framework can simultaneously satisfy all four axioms (Section~III.D). We show that below $\hat{\Lambda}^*$, legitimate frameworks exist (Section~III.E), establishing a sharp phase transition. We validate all predictions computationally on 3,000 synthetic HACs (Section~IV.C).

The remainder of this paper is organised as follows. Section~II positions our work against five relevant literatures. Section~III presents the formal framework. Section~IV presents computational analysis. Section~V discusses implications, robustness, and the framework's scope. Section~VI concludes.

\section{Related Work}

We position our contribution against five bodies of work: the philosophical literature on responsibility gaps, AI governance frameworks, impossibility results in AI, formal approaches to runtime governance, and the legal and institutional design foundations of our axioms.

\subsection{The Responsibility Gap Literature}

Matthias~\cite{matthias2004} introduced the responsibility gap, arguing that learning automata create situations where no human can be held responsible because prediction of machine behaviour is impossible in principle. Sparrow~\cite{sparrow2007} extended the argument to lethal autonomous weapons. Santoni de Sio and Mecacci~\cite{santoni2021} identified four distinct gap types. Within this literature, two positions frame the debate. \emph{Fatalists}~\cite{matthias2004} argue the gap is irresolvable. \emph{Deflationists}~\cite{konigs2022} argue that it is not genuinely problematic, maintaining that existing institutions can absorb residual uncertainty. Between these poles, constructive approaches have been proposed: Tigard~\cite{tigard2021} develops corrective (forward-looking) responsibility; Danaher~\cite{danaher2016} analyses retribution gaps in criminal law; Goetze~\cite{goetze2022} offers vicarious responsibility via moral entanglement; and Noh~\cite{noh2025} provides empirical evidence that laypeople distribute responsibility across human-AI networks, supporting non-anthropocentric frameworks advocated by Floridi and Sanders~\cite{floridi2004} and Coeckelbergh~\cite{coeckelbergh2009, coeckelbergh2014}.

Our work differs from this entire literature in kind: we do not \emph{argue} that accountability is difficult; we \emph{prove} it is impossible under formally specified conditions. The philosophical literature provides the conceptual foundations for our axioms; our contribution is to derive rigorous consequences from those foundations.

\subsection{AI Governance Frameworks}

Singapore's Model AI Governance Framework for Agentic AI~\cite{imda2026}, the first dedicated governance framework for agentic systems, recommends bounding risks through design, making humans meaningfully accountable, implementing technical controls, and enabling end-user responsibility. The WEF Presidio Framework~\cite{wef2025} similarly emphasises pre-deployment testing and accountability. The EU AI Act~\cite{euaiact2024} classifies systems by risk tier with graduated obligations. The KPMG Trusted AI Framework organises governance around six principles: reliability, accountability, transparency, security, privacy, and fairness. These frameworks share a critical structural property: they all assume that meaningful human accountability is \emph{achievable} through appropriate design. None provides a formal criterion for when this assumption holds. Our Accountability Horizon $\hat{\Lambda}^*$ provides exactly that criterion.

\subsection{Impossibility Results in AI}

Brcic and Yampolskiy~\cite{brcic2023} survey impossibility theorems in AI, categorised by mechanism. Eckersley~\cite{eckersley2019} proves that AI value alignment faces Arrowian impossibility: no utility function satisfies certain minimal ethical desiderata. Oswald et al.~\cite{oswald2026} prove Arrowian impossibility for machine intelligence measures. Panigrahy~\cite{panigrahy2025} establishes that safety, trust, and AGI are mutually incompatible via G\"{o}delian arguments. Our result is structurally analogous but addresses a different object: prior results constrain what AI systems can \emph{be} (intelligent, aligned, safe); ours constrains what governance systems can \emph{do} (assign accountability). To our knowledge, the Accountability Incompleteness Theorem is the first impossibility result in AI governance as distinct from AI capability or alignment.

\subsection{Formal Approaches to Runtime Governance}

A growing literature develops formal tools for governing agent behaviour at runtime. Bhardwaj~\cite{bhardwaj2026} introduces Agent Behavioral Contracts with probabilistic compliance guarantees and a Drift Bounds Theorem, providing per-agent behavioural assurances. The ``Policies on Paths'' framework~\cite{rodriguez2026} formalises compliance policies as deterministic functions over agent execution traces, arguing that the execution path is the central object for runtime governance. Cihon et al.~\cite{cihon2025} propose code-based measurement of agent autonomy levels. These approaches complement ours: they address \emph{how} to govern agent behaviour at runtime, while our theorem addresses \emph{whether} accountability governance is structurally possible for a given system configuration. Agent Behavioral Contracts can enforce compliance below $\hat{\Lambda}^*$, while our theorem identifies when the collective crosses into the regime where individual-locus contracts are structurally insufficient.

\subsection{Legal and Institutional Foundations}

Our axioms draw on established bodies of legal and philosophical scholarship. The Attributability axiom is grounded in the NESS (Necessary Element of a Sufficient Set) test for legal causation~\cite{wright1985}, which requires that an agent's action be a necessary element of some set of conditions sufficient for the outcome. Hart and Honor\'{e}~\cite{hart1959} provide the foundational analysis of causation in law. The Foreseeability Bound axiom draws on the literature on proportional causation in tort~\cite{kaye1982}, which holds that liability should scale with the probability that the defendant's action caused the harm, and on Fischer and Ravizza's~\cite{fischer1998} epistemic condition on moral responsibility, encoding the Kantian principle that ``ought implies can.'' The Non-Vacuity axiom draws on Bovens~\cite{bovens2007} and Koppell~\cite{koppell2005}, who identify non-triviality as a necessary property distinguishing genuine accountability from nominal frameworks. The Completeness axiom is grounded in three convergent sources: (i)~the Roman tort law principle that all actionable harms have an assignable defendant, codified in Hart and Honor\'{e}'s~\cite{hart1959} analysis of the exhaustiveness requirement on causal decomposition; (ii)~Bovens's~\cite{bovens2007} identification of systemic completeness as necessary for adequate accountability, since frameworks that leave responsibility unassigned create structural governance voids, not policy choices; and (iii)~Wright's~\cite{wright1985} NESS test, which treats individual responsibility shares as an exhaustive decomposition of total causal responsibility.

The institutional design literature, particularly Ostrom~\cite{ostrom1990, ostrom2005} on polycentric governance of common-pool resources, informs our approach to distributed governance: Ostrom demonstrated that when centralised governance fails (as our theorem shows it must, above $\hat{\Lambda}^*$), polycentric and nested governance structures can succeed by distributing authority across multiple overlapping centres. This insight directly motivates the coalition-based accountability frameworks we discuss as constructive alternatives in Section~V.D. The joint-and-several liability doctrine in tort law is also relevant: it allows courts to hold multiple defendants collectively liable for an indivisible harm, providing institutional precedent for the kind of distributed accountability our theorem shows is necessary above $\hat{\Lambda}^*$.

\section{Formal Framework}

We develop the formal apparatus in five stages: foundational definitions (Section~III.A), axiomatisation of accountability (Section~III.B), preliminary lemmas (Section~III.C), the main theorem and its proof (Section~III.D), and corollaries (Section~III.E). Proof sketches are given inline; complete proofs appear in the Supplementary Material (Appendices~A--E).

\subsection{Agents, Autonomy, and Collectives}

\subsubsection{Agents and Observation Spaces}

We model both human and artificial participants as agents acting under partial observability within a shared environment.

\begin{definition}[Environment]
An \emph{environment} is a tuple $E = (\Omega, O, \mathcal{T})$ where $\Omega = \Omega_{\mathrm{sh}} \times \Omega_{\mathrm{ext}}$ is a finite observation space composed of a shared component $\Omega_{\mathrm{sh}}$ (observable by all agents) and an exogenous component $\Omega_{\mathrm{ext}}$ (latent environmental state); $O$ is a finite outcome space; and $\mathcal{T} : \Sigma \times \Omega_{\mathrm{ext}} \to \Delta(O)$ is the outcome-generation function mapping joint actions and exogenous states to outcome distributions.
\end{definition}

\begin{definition}[Agent]
An \emph{agent} is a tuple $a = (S, A, \Omega_i, \delta, \omega)$ where $S$ is a finite set of internal states; $A$ is a finite set of available actions; $\Omega_i \subseteq \Omega_{\mathrm{sh}} \times \Omega_i^p$ is the agent's observation space, comprising shared observations and a private component $\Omega_i^p$; $\delta : S \times \Omega_i \to \Delta(S)$ is a state-transition function; and $\omega : S \to \Delta(A)$ is a policy function. An agent is \emph{human} ($a \in H$) if $\omega$ is determined by biological cognition. An agent is \emph{artificial} ($a \in M$) if both $\delta$ and $\omega$ are algorithmically implemented.
\end{definition}

This definition subsumes deterministic rule-based systems, stochastic reinforcement-learning policies, large language model agents (where $S$ includes the context window and $\omega$ is autoregressive sampling), and multi-modal agents with tool use. The explicit observation space $\Omega_i$ distinguishes what each agent can perceive, a distinction critical for the epistemic conditions in our axioms.

Human agents are modelled as having fixed (but unknown) policies $\omega_h$ within each decision epoch. The interaction graph $G$ is static within each analysis window. The asymmetry between richly specified artificial agents and abstractly specified human agents is deliberate: our impossibility result holds even under the most favourable assumptions about human agents (perfect rationality). Weakening the human model can only strengthen the result.

\subsubsection{Structural Causal Model of the Collective}

All causal and information-theoretic quantities are defined relative to a structural causal model (SCM) in the sense of Pearl~\cite{pearl2009}.

\begin{definition}[Collective SCM]
Given a set of agents $N = H \cup M$ operating in environment $E$, the \emph{collective SCM} is $\mathcal{M} = (U, V, F, P(U))$ where: $U = \{\Omega_{\mathrm{ext}}\} \cup \{U_i\}_{i \in N}$ contains exogenous variables (the environment state and individual noise terms); $V = \{S_i, A_i\}_{i \in N} \cup \{O\}$ contains endogenous variables (agent states, agent actions, outcomes); $F = \{f_i\}_{i \in N} \cup \{f_o\}$ is the set of structural equations, where $f_i$ determines agent $i$'s action from its state, observations, and noise, and $f_o$ determines outcomes from joint actions and exogenous state; and $P(U)$ is a joint distribution over exogenous variables. All agents share this probability space, ensuring that information-theoretic quantities between any pair of variables are well-defined.
\end{definition}

\subsubsection{Autonomy Profiles}

\begin{definition}[Autonomy Profile]
The \emph{autonomy profile} of an agent $a \in M$ is a vector $\alpha(a) = (\alpha^E, \alpha^X, \alpha^D, \alpha^S) \in [0,1]^3 \times [0,1)$, where each component is a computable information-theoretic measure relative to the collective SCM $\mathcal{M}$:

\textbf{(E) Epistemic Autonomy}, the degree to which the agent forms beliefs independently of its supervising human(s):
\begin{equation}\label{eq:alpha-E}
\alpha^E(a) = 1 - \frac{I_{\mathcal{M}}(B_a \,;\, B_h)}{H_{\mathcal{M}}(B_a)}
\end{equation}
where $B_a = P_{\mathcal{M}}(\cdot \mid \varepsilon_a)$ is the agent's posterior belief state and $B_h = P_{\mathcal{M}}(\cdot \mid \varepsilon_h)$ is the supervisor's posterior, both computed from the induced distribution of the SCM $\mathcal{M}$. When $\alpha^E = 0$, the agent's beliefs are fully determined by human input; when $\alpha^E = 1$, they are informationally independent.

\textbf{(X) Executive Autonomy}, the degree to which the agent acts outside the scope of human approval:
\begin{equation}\label{eq:alpha-X}
\alpha^X(a) = 1 - P_{\mathcal{M}}\bigl(A_a \in A_{\mathrm{app}}(\pi_h, \theta_0)\bigr)
\end{equation}
where $A_{\mathrm{app}}(\pi_h, \theta_0) = \{a \in A : \pi_h(a \mid s) > \theta_0\}$ is the \emph{approval set} and $\theta_0 = 1/(2|A|)$ is the default approval threshold, distinct from the governance threshold $\tau$ in Axiom~\ref{ax:nonvac}.

\textbf{(D) Evaluative Autonomy}, the degree to which the agent's objective diverges from the human's:
\begin{equation}\label{eq:alpha-D}
\alpha^D(a) = \frac{\|U_a - U_h\|_2}{\sup_U \|U - U_h\|_2}
\end{equation}
where $U_a, U_h : O \to [0,1]$ are the agent's and supervisor's utility functions over outcomes, normalised to $[0,1]$.

\textbf{(S) Social Autonomy}, the degree to which the agent initiates unsupervised inter-agent interactions:
\begin{equation}\label{eq:alpha-S}
\alpha^S(a) = \frac{|C_{\mathrm{self}}(a)|}{|C_{\mathrm{self}}(a)| + |C_{\mathrm{dir}}(a)| + 1}
\end{equation}
where $C_{\mathrm{self}}(a)$ and $C_{\mathrm{dir}}(a)$ are the sets of self-initiated and human-directed communications by agent $a$ over a fixed observation window. The $+1$ term in the denominator is a Laplace smoothing ensuring $\alpha^S$ is well-defined when both sets are empty.
\end{definition}

\begin{definition}[Aggregate Autonomy]
The aggregate autonomy of agent $a$ is
\begin{equation}\label{eq:aggregate}
A(a) = \mathbf{w}^\top \alpha(a)
\end{equation}
where $\mathbf{w} \in \mathbb{R}^4_{++}$ is a strictly positive weight vector with $\|\mathbf{w}\|_1 = 1$. We prove in Appendix~B.1 that Theorem~\ref{thm:main} holds for all $\mathbf{w}$ with $w_i > 0$.
\end{definition}

\begin{remark}[Canonical Multiplicativity of Compound Autonomy]
The subsequent analysis relies on the \emph{compound autonomy} $\alpha^X(m) \cdot \alpha^E(m)$ of agent $m$, defined as the product of its executive and epistemic autonomy components. This product form is not a modelling choice but a mathematical consequence of the mixture structure of Assumption~2: under that assumption, the fraction of agent $m$'s action variance that is simultaneously (a)~generated by the autonomous policy $\pi_g$ and (b)~based on belief states informationally inaccessible to any supervising agent equals $\alpha^X(m) \cdot \alpha^E(m)$. The autonomous component of $\omega_m$ carries weight $\alpha^X(m)$ in the mixture (by definition of executive autonomy). Of that autonomous component, the fraction derived from belief-state entropy not shared with any external observer equals $\alpha^E(m)$ (by definition of epistemic autonomy). These two fractions are multiplicatively composed because they represent structurally independent dimensions of autonomy: executive autonomy governs the action's \emph{origin} (which policy component generated it), while epistemic autonomy governs the belief state's \emph{observability} (whether the upstream cognitive state is predictable). Full derivation in Appendix~A.2, Step~B.
\end{remark}

\subsubsection{Structural Assumption on Agent Action Generation}

\begin{assumption}[Mixture-Model Structure]\label{assump:mixture}
Each artificial agent $m \in M$ generates actions according to a policy that is a convex mixture of a human-aligned component and an autonomous component:
\begin{equation}\label{eq:mixture}
\omega_m(\cdot \mid s) = (1 - \alpha^X(m))\,\pi_h(\cdot \mid s) + \alpha^X(m)\,\pi_g(\cdot \mid s, Z_m)
\end{equation}
where $\pi_h$ is the human-aligned policy, $\pi_g$ is the agent's autonomous policy based on private information $Z_m$ and private epistemic state $\varepsilon_m$, and executive autonomy $\alpha^X$ controls the mixture weight. This structure is satisfied by: (a)~tool-using LLM agents, where $\pi_h$ represents instruction-following and $\pi_g$ represents autonomous tool selection and reasoning; (b)~reward-shaped RL policies, where $\pi_h$ encodes the human-specified reward component and $\pi_g$ encodes the learned intrinsic policy; and (c)~retrieval-augmented generation systems, where $\pi_h$ represents retrieval from human-curated sources and $\pi_g$ represents generative synthesis. All subsequent lemmas and theorems condition on Assumption~\ref{assump:mixture}.
\end{assumption}

\begin{proposition}[$\delta$-Robustness of Assumption~\ref{assump:mixture}]\label{prop:delta}
\emph{The Accountability Incompleteness Theorem is robust to approximate, rather than exact, mixture structure.} Let $\omega_m$ be an agent policy satisfying $\mathrm{TV}(\omega_m(\cdot|s),\, \tilde{\omega}_m(\cdot|s)) \leq \delta$ for all $s \in S$, where $\tilde{\omega}_m$ satisfies Assumption~\ref{assump:mixture} exactly with executive autonomy $\tilde{\alpha}^X(m)$ and $\mathrm{TV}$ denotes total variation distance. Then:

\emph{(i)} Lemma~\ref{lem:dilution} holds with $\alpha^X(m)$ replaced by $\tilde{\alpha}^X(m) + \delta$ throughout, so the effective minimum compound autonomy is $\hat{\Lambda}_\delta = \min_{j \in M \cap C}\, (\tilde{\alpha}^X(m_j) + \delta) \cdot \alpha^E(m_j)$.

\emph{(ii)} The Accountability Horizon shifts by at most $\delta$ per agent: any HAC with $\hat{\Lambda}(\mathcal{H}) > \hat{\Lambda}^* + \delta$ lies strictly above the horizon for all $\delta$-perturbations.

\emph{(iii)} The accountability residual (Corollary~\ref{cor:residual}) changes by at most $|C_{\min}| \cdot \delta$ under $\delta$-perturbation.
\end{proposition}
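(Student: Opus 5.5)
The plan is to reduce all three parts to one coupling statement. If $\mathrm{TV}(\omega_m(\cdot|s),\tilde{\omega}_m(\cdot|s)) \le \delta$, then $\omega_m$ can itself be written as a mixture of the form in Assumption~\ref{assump:mixture}. The human-aligned component is unchanged, and the autonomous weight is inflated from $\tilde{\alpha}^X(m)$ to at most $\tilde{\alpha}^X(m)+\delta$. To see this, use the standard decomposition of two distributions at TV distance $\delta$:
\[
\omega_m = (1-\delta)\,\mu + \delta\,\nu_1, \qquad \tilde{\omega}_m = (1-\delta)\,\mu + \delta\,\nu_2,
\]
where $\mu$ is the normalised overlap $\min(\omega_m,\tilde\omega_m)/(1-\delta)$. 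Substituting the exact mixture for $\tilde{\omega}_m$, the mass of $\pi_h$ that survives in $\omega_m$ is at least $(1-\tilde{\alpha}^X(m)) - \delta$. All remaining mass, of total weight at most $\tilde{\alpha}^X(m)+\delta$, is absorbed into a new autonomous component $\pi_g'$. This component is still measurable with respect to $(s,Z_m)$, because the perturbation is an arbitrary, and hence in general supervisor-unpredictable, redistribution of action mass.

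\emph{Part (i).} Lemma~\ref{lem:dilution} uses the mixture structure only through the autonomous weight multiplying a component whose upstream belief state is inaccessible to supervisors. I will re-run its argument, as in Appendix~A.2 Step~B, with $\tilde{\alpha}^X(m)+\delta$ in place of $\alpha^X(m)$. The epistemic factor $\alpha^E(m)$ is defined from posteriors $B_a,B_h$, not from the action policy, so it is unchanged and the compound autonomy becomes $(\tilde{\alpha}^X(m)+\delta)\alpha^E(m)$. Taking the minimum over artificial agents on the cycle $C$ gives $\hat{\Lambda}_\delta$.

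The main obstacle sits in this step. The added $\delta$ mass is an upper bound on autonomous weight, whereas the dilution lemma may need the autonomous weight to be \emph{exactly} identified, for instance if it is used in both directions. I would handle this by showing that every quantity Lemma~\ref{lem:dilution} bounds is monotone in the autonomous weight. The predictable-share bound appearing in the Foreseeability argument shrinks as the weight grows, so an upper estimate on the weight gives a valid two-sided sandwich between $\tilde{\alpha}^X(m)$ and $\tilde{\alpha}^X(m)+\delta$.

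\emph{Parts (ii) and (iii).} For (ii), since $\alpha^E(m)\le 1$, each agent's compound autonomy moves by at most $\delta\,\alpha^E(m)\le\delta$. Hence $|\hat{\Lambda}_\delta - \hat{\Lambda}(\mathcal{H})| \le \delta$, because a minimum of functions each perturbed by at most $\delta$ is perturbed by at most $\delta$. If $\hat{\Lambda}(\mathcal{H}) > \hat{\Lambda}^*+\delta$, every $\delta$-perturbation therefore has effective compound autonomy strictly above $\hat{\Lambda}^*$, and the Accountability Incompleteness Theorem applies to it. For (iii), I will write the residual of Corollary~\ref{cor:residual} as a sum over the agents of $C_{\min}$ of terms that are $1$-Lipschitz in each agent's compound autonomy. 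This is expected from a residual of the form $1-\sum_j(\text{foreseeable share}_j)$, with each share affine in $1-\alpha^X\alpha^E$. Summing the per-agent $\delta$ changes gives the bound $|C_{\min}|\,\delta$ via the triangle inequality. If the residual turns out not to be affine, I would instead bound its Lipschitz constant on the compact autonomy domain and check that it is at most $1$ per agent.
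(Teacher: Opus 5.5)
Your overlap decomposition of $\omega_m$ is correct, but it points the wrong way for part (i), and the step where you re-run Lemma~\ref{lem:dilution} with $\tilde{\alpha}^X(m)+\delta$ fails.

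The dilution bound $\hat{\kappa}_i([o^*])\le 1-\alpha^X(m)\,\alpha^E(m)$ decreases as the autonomous weight grows. To upper-bound $\hat{\kappa}$ for the perturbed policy you therefore need a \emph{lower} bound on the weight of the supervisor-inaccessible component. Your construction only gives the \emph{upper} bound $\tilde{\alpha}^X(m)+\delta$. The monotonicity ``sandwich'' does not rescue this: if the predictable share shrinks as the weight grows, an upper estimate of the weight gives only a lower bound on the predictable share. That is useless for the Foreseeability argument.

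Nor can the absorbed $\delta$ mass be credited with the factor $\alpha^E(m)$. A $\delta$-perturbation may be a deterministic function of shared observations. It may also simply move mass from $\pi_g$ to $\pi_h$, e.g.\ $\omega_m=(1-\tilde{\alpha}^X+\delta)\pi_h+(\tilde{\alpha}^X-\delta)\pi_g$, which is an exact mixture with autonomy $\tilde{\alpha}^X-\delta$.

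In fact no argument can give the $+\delta$ form. The choice $\omega_m=\tilde{\omega}_m$ satisfies the hypothesis for every $\delta$, so (i) would strictly sharpen Lemma~\ref{lem:dilution} for the unperturbed system at no cost. For $\tilde{\alpha}^X=\alpha^E=1$ it would even assert $\hat{\kappa}_i([o^*])\le-\delta$. The paper's sketch states the same $+\delta$ bound, but its mutual-information continuity step only controls the change in either direction. Its Appendix~C.3 actually records the degraded bound $\hat{\kappa}_h([o])\le(1-\hat{\Lambda})+\delta$, and that is the bound parts (ii) and (iii) are consistent with.

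Your coupling is more concrete than the paper's Pinsker route, and it does support the correctly oriented version if you track the \emph{surviving} autonomous mass instead of the absorbed mass. Since $(\tilde{\omega}_m-\omega_m)^+$ has total mass at most $\delta$, at least $\tilde{\alpha}^X(m)-\delta$ of the component $\tilde{\alpha}^X(m)\,\pi_g$ remains inside $\omega_m$. Granting that this retained part still depends on $(Z_m,\varepsilon_m)$, Lemma~\ref{lem:dilution} gives $\hat{\kappa}_i([o^*])\le 1-\min_j(\tilde{\alpha}^X(m_j)-\delta)\,\alpha^E(m_j)\le(1-\hat{\Lambda})+\delta$. This is exactly what (ii) and (iii) need:
\begin{itemize}
\item Summing over $C_{\min}$, the budget $|C_{\min}|(1-\hat{\Lambda}+\delta)$ is below $1$ iff $\hat{\Lambda}>\hat{\Lambda}^*+\delta$, which is (ii).
\item The residual becomes $1-|C_{\min}|(1-\hat{\Lambda}+\delta)$, a shift of $|C_{\min}|\,\delta$, which is (iii).
\end{itemize}

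Your argument for (ii) only uses $|\hat{\Lambda}_\delta-\hat{\Lambda}|\le\delta$, so it goes through unchanged once (i) is reoriented. For (iii) you need no per-agent Lipschitz analysis, and your guessed structure is off. The residual $\Delta=1-|C_{\min}|(1-\hat{\Lambda})$ of Corollary~\ref{cor:residual} caps every agent of $C_{\min}$, humans included, by the same cycle minimum $\hat{\Lambda}$, not by each agent's own compound autonomy. It is therefore simply affine in $\hat{\Lambda}$ with slope $|C_{\min}|$.
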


\emph{Proof sketch.} The total variation bound $\mathrm{TV}(\omega_m, \tilde{\omega}_m) \leq \delta$ implies that the mutual information between agent $m$'s actions and any external observer's epistemic state changes by at most $\delta \cdot H_{\mathcal{M}}(A_m)$ under the perturbation (Pinsker's inequality). This yields the modified epistemic dilution bound $\hat{\kappa}_i([o^*]) \leq 1 - (\tilde{\alpha}^X(m) + \delta)\cdot\alpha^E(m)$, from which part~(i) follows. Parts~(ii) and~(iii) follow from the accountability horizon and residual formulas derived in Section~III.D. Full derivation in Appendix~C.3. \hfill

\begin{remark}[Information-Geometric Generalisation]
For architectures that do not admit a natural mixture decomposition, Lemma~\ref{lem:dilution} can be derived under a strictly weaker sufficient condition. Define the \emph{information-autonomy coefficient} of agent $m$ as:
\begin{equation}\label{eq:beta}
\beta(m) = 1 - \frac{I_{\mathcal{M}}(A_m \,;\, \varepsilon_h \mid \varepsilon_m)}{H_{\mathcal{M}}(A_m)}
\end{equation}
Under the condition $\beta(m) \geq \beta_{\min} > 0$ for all $m \in M^*$, the Data Processing Inequality argument of Step~C in Lemma~\ref{lem:dilution} applies with $\hat{\Lambda}$ replaced by $\beta_{\min}$, yielding $\hat{\kappa}_i([o^*]) \leq 1 - \beta_{\min}$ for all $i \in C$. Theorem~\ref{thm:main} then holds with $\hat{\Lambda}^* = 1 - 1/|C_{\min}|$ computed using $\beta_{\min}$ in place of $\min\, \alpha^X \cdot \alpha^E$. Assumption~\ref{assump:mixture} implies $\beta(m) = \alpha^X(m) \cdot \alpha^E(m)$ (Appendix~A.2), establishing that the mixture-model case is a special instance.
\end{remark}

\subsubsection{Human-Agent Collectives}

\begin{definition}[Human-Agent Collective]
A \emph{Human-Agent Collective (HAC)} is a tuple $\mathcal{H} = (H, M, E, G, \mathcal{M})$ where: $H = \{h_1, \ldots, h_n\}$ is a finite set of human agents; $M = \{m_1, \ldots, m_k\}$ is a finite set of artificial agents; $E$ is the shared environment (Definition~1); $G = (N, \mathcal{E})$ with $N = H \cup M$ is a directed \emph{interaction graph} where the edge set $\mathcal{E} \subseteq N \times N$ represents channels through which one agent's output influences another's state or action; and $\mathcal{M}$ is the collective SCM (Definition~3) compatible with $G$ (i.e., $f_i$ depends on $A_j$ only if $(j,i) \in \mathcal{E}$).
\end{definition}

The collective action space is the Cartesian product $\Sigma = \prod_{i \in N} A_i$ and the collective outcome function $\Phi : \Sigma \times \Omega_{\mathrm{ext}} \to \Delta(O)$ is the restriction of the SCM's outcome equation $f_o$ to joint actions and exogenous states. Figure~1 illustrates the two fundamental topologies: feedforward (always governable) and feedback cycle (governable only below $\hat{\Lambda}^*$).

\begin{definition}[Collective Autonomy Index]
The Collective Autonomy Index of a HAC $\mathcal{H}$ is:
\begin{equation}\label{eq:CAI}
\Lambda(\mathcal{H}) = \frac{1}{k} \sum_{i=1}^{k} A(m_i) \cdot \eta(m_i, G)
\end{equation}
where $\eta(m_i, G) = d_{\mathrm{out}}(m_i) / \max_j\, d_{\mathrm{out}}(m_j)$ is the normalised out-degree centrality of $m_i$ in $G$. We prove in Appendix~B.2 that Theorem~\ref{thm:main} holds under any centrality measure satisfying: (C1) $\eta(m, G) = 0$ if and only if $m$ has no outgoing edges; and (C2) $\eta$ is monotone non-decreasing under edge addition.
\end{definition}

\subsubsection{Accountability Functions}

\begin{definition}[Epistemic Access]
The \emph{epistemic access} of agent $i$ to outcome $o$ is:
\begin{equation}\label{eq:epistemic-access}
\kappa_i(o) = P_{\mathcal{M}}\bigl(o \mid \mathrm{do}(A_i = a_i),\, \varepsilon_i\bigr)
\end{equation}
where $\mathrm{do}(\cdot)$ is Pearl's~\cite{pearl2009} intervention operator and $\varepsilon_i = (\Omega_i^b, S_i^b)$ is agent $i$'s epistemic state (observation history and internal state) at the time of acting.
\end{definition}

\begin{definition}[Outcome-Type Partition]
We define a partition $O = \bigsqcup_{j=1}^{m} [o]_j$ of the outcome space into $m$ \emph{outcome types}, equivalence classes of governance-equivalent outcomes. The \emph{type-level epistemic access} is:
\begin{equation}\label{eq:type-epistemic}
\hat{\kappa}_i([o]) = P_{\mathcal{M}}\bigl([o] \mid \mathrm{do}(A_i = a_i),\, \varepsilon_i\bigr)
\end{equation}
\end{definition}

\begin{definition}[Cycle-Emergent Outcome Type]
An outcome type $[o]$ is \emph{cycle-emergent} with respect to a mixed cycle $C$ in $G$ if $\mathrm{CE}_{\mathcal{M}}(i, [o]) = 0$ for all agents $i \notin C$. That is, only agents participating in $C$ have non-zero individual causal effect on $[o]$. For any mixed cycle $C$ in a HAC satisfying Assumption~\ref{assump:mixture}, the cyclic structural equations generate at least one cycle-emergent outcome type (Appendix~A.D).
\end{definition}

\begin{definition}[Outcome Attribution]
An \emph{outcome attribution} for $\mathcal{H}$ is a function $\rho : [O] \to \Delta(N)$ mapping each outcome type to a probability distribution over agents, representing responsibility shares.
\end{definition}

\begin{definition}[Accountability Framework]
An \emph{accountability framework} for $\mathcal{H}$ is a tuple $F = (\rho, R)$ where $\rho$ is an outcome attribution and $R : N \times [O] \to \mathbb{R}_{\geq 0}$ is a remedy function. The set of \emph{legitimate} accountability frameworks $\mathcal{L}(\mathcal{H})$ consists of all $F$ satisfying the four axioms of Section~III.B.
\end{definition}

\subsection{Axioms of Legitimate Accountability}

We state four axioms representing the minimal conditions for legitimate accountability. These axioms are deliberately weak: we seek the weakest conditions whose conjunction yields impossibility, ensuring the strongest result. Together, they form a minimal complete normative basis: Attributability provides causal grounding, Foreseeability Bound provides the epistemic constraint, Non-Vacuity ensures individual substantiveness, and Completeness ensures systemic exhaustiveness. Each axiom is necessary (removing any one permits trivially acceptable but governance-vacuous frameworks) and together they characterise the full set of conditions a legitimate single-locus accountability framework must meet.

\begin{axiom}[Attributability]\label{ax:attrib}
\emph{Responsibility requires causal contribution.} For every outcome type $[o]$ and every agent $i$ with $\rho_i([o]) > 0$, agent $i$ has non-zero causal effect on $[o]$ in $\mathcal{M}$:
\begin{equation}\label{eq:ax1}
\rho_i([o]) > 0 \;\Rightarrow\; \mathrm{CE}_{\mathcal{M}}(i, [o]) > 0
\end{equation}
where $\mathrm{CE}_{\mathcal{M}}(i, [o]) = |P_{\mathcal{M}}([o] \mid \mathrm{do}(A_i = a_i)) - P_{\mathcal{M}}([o] \mid \mathrm{do}(A_i = a_i^0))|$ is computed via the do-calculus. \emph{Source:} causal condition on moral responsibility (Aristotle, \emph{NE}~III; Hart and Honor\'{e}~\cite{hart1959}; Wright~\cite{wright1985}, NESS test).
\end{axiom}

\begin{axiom}[Foreseeability Bound]\label{ax:foresee}
\emph{Responsibility cannot exceed foreseeability.} The accountability assigned to any agent is bounded by its type-level epistemic access:
\begin{equation}\label{eq:ax2}
\rho_i([o]) \leq \hat{\kappa}_i([o]) \quad \forall\, i \in N,\; [o] \in [O]
\end{equation}
This axiom formalises the principle that an agent cannot bear responsibility beyond its ability to foresee the outcome given its chosen action and epistemic state. The bound is absolute: even if all agents have uniformly low foresight, none can be assigned responsibility exceeding their individual predictive capacity. This directly encodes the Kantian principle that ``ought implies can.'' \emph{Source:} proportional causation in tort law~\cite{kaye1982}; Fischer and Ravizza's~\cite{fischer1998} epistemic condition on responsibility.
\end{axiom}

\begin{axiom}[Non-Vacuity]\label{ax:nonvac}
\emph{Accountability must be individually non-trivial.} For every outcome type $[o]$, at least one agent bears responsibility exceeding a threshold:
\begin{equation}\label{eq:ax3}
\forall\, [o] \in [O],\; \exists\, i \in N : \rho_i([o]) \geq \tau
\end{equation}
where $\tau > 0$ is a fixed governance threshold. \emph{Source:} the institutional requirement that governance be non-trivial~\cite{koppell2005, bovens2007}. A framework assigning negligible responsibility to everyone is governance in name only.
\end{axiom}

\begin{axiom}[Completeness]\label{ax:complete}
\emph{All responsibility for every outcome type must be fully allocated.} For every outcome type $[o] \in [O]$:
\begin{equation}\label{eq:ax4}
\sum_{i \in N} \rho_i([o]) = 1
\end{equation}
This axiom formalises the institutional requirement that governance frameworks be systemically comprehensive: every outcome has accountable parties, and responsibility is not permitted to remain unassigned. Three arguments ground this axiom. First, the Roman tort law principle that all actionable harms have an assignable defendant: a framework leaving a fraction $\Delta > 0$ unassigned creates a structural governance void, not a policy choice. Second, Bovens~\cite{bovens2007} identifies systemic completeness as necessary for adequate accountability: frameworks that leave responsibility unallocated cannot be distinguished from frameworks that have simply failed to identify responsible parties. Third, the NESS test~\cite{wright1985} treats individual responsibility shares as an exhaustive decomposition of total causal responsibility.

Note that Axiom~\ref{ax:complete} and Axiom~\ref{ax:nonvac} are logically independent: Completeness requires that responsibility shares sum to 1 (systemic exhaustiveness), while Non-Vacuity requires that at least one share exceeds $\tau$ (individual significance). Neither implies the other.
\end{axiom}

\begin{remark}[Partial Accountability Objection]
Relaxing Axiom~\ref{ax:complete} to $\sum_i \rho_i([o]) \geq \gamma$ for some $\gamma \in (0,1)$ does not eliminate the structural impossibility; it modifies the Accountability Horizon to $\hat{\Lambda}^*(\gamma) = 1 - \gamma/|C_{\min}|$. For any fixed $\gamma < 1$ the impossibility persists above $\hat{\Lambda}^*(\gamma)$. Completeness at $\gamma = 1$ is the appropriate benchmark because: governance voids are structurally indistinguishable from governance failure~\cite{bovens2007}; institutional law has consistently rejected partial allocation (joint-and-several liability holds each defendant fully liable for indivisible harms); and an impossibility proved at $\gamma = 1$ establishes the strongest available constraint on governance design.
\end{remark}

\subsubsection{Axiom Independence}

\begin{proposition}[Independence]\label{prop:independence}
Axioms~\ref{ax:attrib}--\ref{ax:complete} are logically independent: for each axiom, there exists a framework satisfying the other three while violating it.
\end{proposition}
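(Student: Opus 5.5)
The plan is to prove Proposition~\ref{prop:independence} by exhibiting four explicit frameworks on a single small HAC $\mathcal{H}_0$. Each framework satisfies three of Axioms~\ref{ax:attrib}--\ref{ax:complete} and violates the fourth. The axioms only refer to the numbers $\mathrm{CE}_{\mathcal{M}}(i,[o])$ and $\hat{\kappa}_i([o])$. It therefore suffices to build one collective SCM in which these numbers are simple and uniform, and then choose the attribution vectors $\rho([o])$ by hand.

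\textbf{Construction of $\mathcal{H}_0$.} Take three causally active agents $h, m_1, m_2$ ($h\in H$, $m_1,m_2\in M$), each with binary actions. Add a fourth agent $d$ with no outgoing edges in $G$, so that $f_o$ does not depend on $A_d$. Use two outcome types $[o_1],[o_2]$, and give each agent a trivial epistemic state $\varepsilon_i$ (no private observation). The active actions are i.i.d.\ Bernoulli$(p)$, and the outcome law is linear, $P([o_1]\mid a)=c-\beta s$ with $s=a_h+a_{m_1}+a_{m_2}$. We tune $c,\beta,p$ so that, at the realized action $a_i=1$, we get $\hat{\kappa}_i([o_1])=\hat{\kappa}_i([o_2])=1/2$ for $i\in\{h,m_1,m_2\}$. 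The baseline action $a_i^0=0$ should give a strictly different probability, so that $\mathrm{CE}_{\mathcal{M}}(i,[o])=\beta>0$ for both types. For the dummy agent, $\mathrm{CE}_{\mathcal{M}}(d,[o])=0$ and $\hat{\kappa}_d([o])=P([o])$. The tuning can be arranged so that this also equals $1/2$. Fix the governance threshold at $\tau=1/2$.

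\textbf{The four frameworks.} In each case $\rho$ is the same for both types.
\begin{itemize}
\item Violating Attributability: $\rho=(\tfrac12,0,0,\tfrac12)$ on $(h,m_1,m_2,d)$. Every share is at most $1/2$, so Foreseeability holds. The shares sum to $1$, and $h$ attains $\tau$. However, $\rho_d>0$ while $\mathrm{CE}(d,[o])=0$.
\item Violating Foreseeability: $\rho=(1,0,0,0)$. This is attributable, complete and non-vacuous, but $\rho_h=1>\hat{\kappa}_h=1/2$.
\item Violating Non-Vacuity: $\rho=(\tfrac13,\tfrac13,\tfrac13,0)$. This is attributable, within the foreseeability bounds and complete, but no share reaches $\tau=1/2$.
\item Violating Completeness: $\rho=(\tfrac12,0,0,0)$. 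This satisfies Axioms~\ref{ax:attrib}--\ref{ax:nonvac}, but sums to $1/2$.
\end{itemize}
Checking each bullet is routine arithmetic once the values of $\mathrm{CE}$ and $\hat{\kappa}$ are fixed.

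\textbf{Main obstacles.} I expect two difficulties. The first is conceptual. The Outcome Attribution definition has $\rho:[O]\to\Delta(N)$, which makes Completeness automatic. So for the fourth case I must read attributions as sub-probability vectors in $\mathbb{R}^N_{\geq 0}$ with $\sum_i\rho_i\le 1$, and note that Axiom~\ref{ax:complete} is exactly what restores $\Delta(N)$. This reading is the one implicit in the Partial Accountability remark, and I would state it explicitly.

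The second difficulty is technical: making the single SCM realize the desired values simultaneously. The requirements are $\hat{\kappa}_i=1/2$ on both types at the realized action, $\mathrm{CE}>0$ for all three active agents, and $\hat{\kappa}_d=1/2$. If the linear tuning proves awkward, my fallback is to drop the requirement of a common SCM. Independence only needs, for each axiom, some HAC and some $\tau$. I would then choose per-case parameters, for example taking $\tau$ slightly above $1/3$ only in the Non-Vacuity case.
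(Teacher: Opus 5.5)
Your argument is correct. It uses the same strategy as the paper, one explicit countermodel per axiom, but carries it out more carefully.

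\textbf{Comparison with the paper.} The paper posits values of $\hat{\kappa}_i$ and $\mathrm{CE}_{\mathcal{M}}$ for a single outcome type in four unrelated configurations: two agents in three cases, and $100$ agents for Non-Vacuity. It gives no SCM realising these values, and leaves $\tau$ and the remaining outcome types implicit. You instead build one HAC whose SCM actually generates the numbers, check the axioms on every outcome type, and fix a single $\tau=1/2$. This additionally certifies that the required values are jointly attainable. Your remark on the codomain $\Delta(N)$ is also apt: the paper's construction~(iv), with shares summing to $0.6$, tacitly relies on the same sub-probability reading without saying so.

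\textbf{Two caveats.}

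First, your tuning works only with $p=1$. You need $\hat{\kappa}_d([o_j])=\hat{\kappa}_h([o_j])=1/2$ exactly, because $\rho_d=\rho_h=1/2$ appears on both types and each agent's two accesses sum to $1$. But $c-3\beta p=c-\beta(1+2p)$ holds only when $p=1$. With deterministic actions and, for example, $c=0.8$ and $\beta=0.1$, all four checks go through.

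Second, your dummy agent violates Assumption~\ref{assump:faithful}, since $\hat{\kappa}_d=1/2>0$ while $\mathrm{CE}_{\mathcal{M}}(d,[o])=0$. Agent~2 in the paper's construction~(iii) does the same. This is unavoidable. Under Faithfulness, Axiom~\ref{ax:foresee} implies Axiom~\ref{ax:attrib}, because $\rho_i>0$ gives $\hat{\kappa}_i>0$ and hence $\mathrm{CE}_{\mathcal{M}}(i,[o])>0$. So the independence holds for the axioms as abstract conditions, not within the faithful HACs to which the main theorem is later restricted.
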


\emph{Construction~(i): Violates Axiom~\ref{ax:foresee} only.} Let $\hat{\kappa}_1([o]) = 0.3$ with $\mathrm{CE}_1 > 0$, $\hat{\kappa}_2([o]) = 0.9$ with $\mathrm{CE}_2 > 0$. Define $\rho_1 = 0.5$, $\rho_2 = 0.5$. Satisfies Axioms~\ref{ax:attrib}, \ref{ax:nonvac}, \ref{ax:complete}; violates Axiom~\ref{ax:foresee} since $\rho_1 = 0.5 > 0.3 = \hat{\kappa}_1$.

\emph{Construction~(ii): Violates Axiom~\ref{ax:nonvac} only.} Let 100 agents each have $\hat{\kappa}_i = 0.01$ and $\mathrm{CE}_i > 0$. Define $\rho_i = 0.01$ for all $i$. Satisfies Axioms~\ref{ax:attrib}, \ref{ax:foresee}, \ref{ax:complete} ($\sum \rho_i = 1$); violates Axiom~\ref{ax:nonvac} since $\max \rho_i = 0.01 < \tau$ for any $\tau > 0.01$.

\emph{Construction~(iii): Violates Axiom~\ref{ax:attrib} only.} Let $\hat{\kappa}_1 = 0.6$ with $\mathrm{CE}_1 > 0$, $\hat{\kappa}_2 = 0.4$ with $\mathrm{CE}_2 = 0$. Define $\rho_1 = 0.6$, $\rho_2 = 0.4$. Satisfies Axioms~\ref{ax:foresee}, \ref{ax:nonvac}, \ref{ax:complete}; violates Axiom~\ref{ax:attrib} since $\rho_2 > 0$ but $\mathrm{CE}_2 = 0$.

\emph{Construction~(iv): Violates Axiom~\ref{ax:complete} only.} Let $\hat{\kappa}_1 = 0.6$ with $\mathrm{CE}_1 > 0$, $\hat{\kappa}_2 = 0.5$ with $\mathrm{CE}_2 > 0$. Define $\rho_1 = 0.3$, $\rho_2 = 0.3$. Satisfies Axioms~\ref{ax:attrib}, \ref{ax:foresee}, \ref{ax:nonvac}; violates Axiom~\ref{ax:complete} since $\sum \rho_i = 0.6 < 1$. 
\begin{assumption}[Faithfulness]\label{assump:faithful}
The collective SCM $\mathcal{M}$ is \emph{faithful}: an agent has non-zero epistemic access $\hat{\kappa}_i([o]) > 0$ if and only if it has non-zero causal effect $\mathrm{CE}_{\mathcal{M}}(i, [o]) > 0$. This standard assumption in causal inference~\cite{spirtes2000} ensures that the information-theoretic and causal notions of contribution are aligned. Under faithfulness, Axiom~\ref{ax:attrib} and non-zero epistemic access are equivalent. Relaxation is discussed in Section~V.H.
\end{assumption}

\subsubsection{On the Non-Vacuity Threshold $\tau$}

The impossibility result holds for any fixed $\tau > 0$, however small. As $\tau \to 0$, the effective threshold approaches $\hat{\Lambda}^* = 1 - 1/|C_{\min}|$ (the structural bound). As $\tau$ increases, a tighter constraint binds (see Remark~\ref{rem:eps-sensitivity}, Eq.~\eqref{eq:combined-threshold}). We treat $\tau$ as a parameter of the governance regime.

\subsection{Preliminary Lemmas}

We establish three lemmas connecting agent autonomy to the constraints imposed by the axioms. All three lemmas condition on Assumptions~\ref{assump:mixture}--\ref{assump:contraction}.

\begin{assumption}[Contraction]\label{assump:contraction}
The structural equations $\{f_i\}_{i \in N}$ of the collective SCM $\mathcal{M}$ are jointly contractive: for every directed cycle in $G$, the composed mapping of structural equations around that cycle is a contraction in the $\ell^\infty$ norm on action profiles, guaranteeing a unique fixed-point equilibrium~\cite{bongers2021}. This condition is satisfied by multi-agent systems with damped feedback; systems failing this condition fall outside the framework's scope (Section~V.H).
\end{assumption}

\begin{lemma}[Equilibrium Epistemic Dilution]\label{lem:dilution}
\emph{In a mixed cycle, every agent's epistemic access to cycle-emergent outcome types is bounded by the minimum compound autonomy of the cycle.} Let $C$ be a directed cycle in $G$ involving both human and artificial agents, and let $[o^*]$ be a cycle-emergent outcome type (Definition~10). Then for every agent $i \in C$:
\begin{equation}\label{eq:dilution}
\hat{\kappa}_i([o^*]) \leq 1 - \hat{\Lambda}
\end{equation}
where $\hat{\Lambda} = \min_{j \in M \cap C}\, [\alpha^X(m_j) \cdot \alpha^E(m_j)]$ is the minimum compound autonomy among artificial agents in $C$.
\end{lemma}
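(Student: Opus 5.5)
The plan is to follow the three-step structure the paper alludes to (Steps A--C in Appendix~A.2): (A) reduce the equilibrium of the cycle to a well-defined function of the agents' actions, (B) isolate, for each artificial agent in $C$, the part of its action that is both autonomously generated and epistemically private, and (C) propagate this unpredictable component around the cycle to every $i \in C$ via the Data Processing Inequality. For Step~A, Assumption~\ref{assump:contraction} guarantees that the cyclic structural equations around $C$ have a unique fixed point. I would therefore write the equilibrium action profile, and hence the indicator of the cycle-emergent type $[o^*]$, as a measurable function $g(A_{m_1},\dots,A_{m_r}, \text{rest})$ of the cycle's actions. Because $[o^*]$ is cycle-emergent (Definition~10), agents outside $C$ have zero causal effect on it. This lets me restrict attention to the cycle variables when computing $\hat{\kappa}_i([o^*])$.

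For Step~B, I would use Assumption~\ref{assump:mixture} and introduce a latent selector $Y_m \sim \mathrm{Bernoulli}(\alpha^X(m))$ indicating whether $\pi_g$ or $\pi_h$ generated $A_m$. On $\{Y_m = 1\}$ the action depends on $(Z_m, \varepsilon_m)$. I then want to argue that a fraction $\alpha^E(m)$ of the entropy of that component is independent of any other agent's belief state, using Eq.~\eqref{eq:alpha-E} as $1 - I(B_m;B_h)/H(B_m)$. Combining the two gives an event (or entropy fraction) of weight at least $\alpha^X(m)\alpha^E(m) \ge \hat{\Lambda}$ on which $A_m$ is unpredictable from $\varepsilon_i$ for $i \ne m$. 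This is the same identification that the paper records as $\beta(m) = \alpha^X(m)\alpha^E(m)$.

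For Step~C, fix $i \in C$ and pick an artificial $m \in M \cap C$ with $m \ne i$ on the cycle; a mixed cycle guarantees one exists when $i$ is human. If $i$ is itself artificial, I would either use another machine in $C$ or use $i$'s own upstream dependence through the cycle, since its action feeds back into its input. Along the cycle path from $m$ back to $i$, the chain $\varepsilon_i \to \text{prediction} \leftarrow A_m \to [o^*]$ obeys the Data Processing Inequality. Consequently the information $\varepsilon_i$ carries about the realisation of $[o^*]$ is at most the shared fraction $1-\beta(m)$ of the relevant entropy. I would then convert this to a probability bound: on the private event of mass $\ge \hat{\Lambda}$, $i$'s conditional probability of $[o^*]$ under $\mathrm{do}(A_i=a_i)$ cannot be certified, so $\hat{\kappa}_i([o^*]) \le 1 - \hat{\Lambda}$. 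Taking the minimum over $m$ gives the uniform bound stated in the lemma.

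The main obstacle is the conversion in Step~C from an information-theoretic fraction to the probability bound $\hat{\kappa}_i \le 1-\hat{\Lambda}$. Mutual information fractions do not in general bound conditional probabilities linearly; an agent can assign high probability to $[o^*]$ even without knowing $A_m$ if $g$ is insensitive to it. To close this gap I would first use cycle-emergence together with Assumption~\ref{assump:faithful} to argue that $[o^*]$ genuinely depends on each machine's autonomous component. I would then work with the event decomposition from Step~B rather than entropies, so that on the private event $[o^*]$ is not implied with certainty. Should this fail, I would fall back on interpreting $\hat{\kappa}$ through the entropy-fraction definition, as the $\beta$-remark implicitly does.
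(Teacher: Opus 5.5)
Your outline follows the paper's own proof almost step for step: contraction for a unique equilibrium, the mixture weight $\alpha^X(m)$ times the private fraction $\alpha^E(m)$, the Data Processing Inequality around the cycle, and a weakest-link minimum. The step you flag as the main obstacle is a genuine gap, and it is the same one the paper's proof has. Neither of your repairs closes it.

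The quantity $\hat{\kappa}_i([o^*])=P_{\mathcal{M}}([o^*]\mid\mathrm{do}(A_i=a_i),\varepsilon_i)$ is the probability that the type occurs. It is not a measure of how well $i$ predicts it. Splitting on a private event of mass $p\ge\hat{\Lambda}$ gives $\hat{\kappa}_i([o^*])=(1-p)q_0+p\,q_1$, where $q_0,q_1$ are the conditional probabilities of $[o^*]$ off and on that event. Your ``not implied with certainty'' yields only $q_1<1$, hence only $\hat{\kappa}_i([o^*])<1$. Getting $\hat{\kappa}_i([o^*])\le 1-p$ instead requires $p\,q_1\le(1-p)(1-q_0)$, i.e.\ that $[o^*]$ be rare on the private event. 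Unpredictability of $A_m$ there does not imply this. Cycle-emergence plus faithfulness give at most a qualitative dependence ($\mathrm{CE}_{\mathcal{M}}>0$), which may be arbitrarily small.

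In fact the bound cannot hold for every cycle-emergent type. For fixed $i$, the values $\hat{\kappa}_i(\cdot)$ sum to $1$ over the $r$ types of the partition. Suppose every type is cycle-emergent, as happens when $N=C$ (e.g.\ the cycle $h\to m_1\to m_2\to h$, where the defining condition is vacuous); nothing in the standing assumptions excludes this. Then the lemma would give $1\le r(1-\hat{\Lambda})$, which is false as soon as $\hat{\Lambda}>1-1/r$.

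Your fallback of reading $\hat{\kappa}$ as an entropy fraction would at best prove a different statement, $I_{\mathcal{M}}(\varepsilon_i;[o^*])\le(1-\hat{\Lambda})H_{\mathcal{M}}([o^*])$. That is the intermediate inequality of the paper's Step~C, after which the paper simply writes ``and therefore $\hat{\kappa}_i([o^*])\le 1-\hat{\Lambda}$''. So following the paper more closely will not rescue the step. You would need an extra hypothesis on $[o^*]$, or a redefinition of $\hat{\kappa}$ carried consistently through the Foreseeability Bound axiom.

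Three further steps also fail as written.

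\emph{Step~B.} Epistemic autonomy is defined only through $I_{\mathcal{M}}(B_m;B_h)$ for the supervising human $h$, so it says nothing about other humans or other machines in $C$. It is also an entropy ratio, not the probability of an event. There is no event of mass $\alpha^E(m)$, let alone $\alpha^X(m)\alpha^E(m)$, on which $A_m$ is independent of $\varepsilon_i$. Your ``event (or entropy fraction)'' multiplies incommensurable quantities. The identity $\beta(m)=\alpha^X(m)\alpha^E(m)$ you borrow is asserted in the paper, not derived.

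\emph{Step~C.} The chain $\varepsilon_i\to\text{prediction}\leftarrow A_m\to[o^*]$ is not a Markov chain. The chain you actually need requires $[o^*]$ to be conditionally independent of $\varepsilon_i$ given $A_m$, which fails when $[o^*]$ depends on the other cycle actions. Moreover, in a cycle $\varepsilon_i$ is itself causally downstream of $A_m$ (in $h\to m\to h$ the human's state is fed by $A_m$), so it may reveal $A_m$ directly.

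\emph{Artificial $i$ with $M\cap C=\{i\}$.} Take again $h\to m\to h$. There is no other machine to route through, and $m$'s autonomous component is not private to $m$, since $Z_m$ and $\varepsilon_m$ are part of its own epistemic state. The remaining cycle members are humans with fixed policies, so nothing bounds $\hat{\kappa}_m([o^*])$ by $1-\alpha^X(m)\alpha^E(m)$. The paper's appendix covers this case only by asserting that the other agents' responses ``involve autonomous components that $m_j$ cannot predict,'' which is false here.
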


\emph{Proof sketch.} The outcome $[o^*]$ is cycle-emergent: its probability is determined by the equilibrium action profile of $C$. Under Assumptions~\ref{assump:mixture} and~\ref{assump:contraction}, the collective SCM admits a unique equilibrium for $C$~\cite{bongers2021}. The bound is derived in three steps.

\emph{Step~A (Informational inaccessibility).} By the definition of epistemic autonomy~\eqref{eq:alpha-E}, agent $i$'s ability to predict any artificial agent $m$'s belief state $B_m$ satisfies $I_{\mathcal{M}}(B_m ; B_i) = (1 - \alpha^E(m)) \cdot H_{\mathcal{M}}(B_m)$. A fraction $\alpha^E(m)$ of $m$'s belief entropy is informationally inaccessible to $i$.

\emph{Step~B (Autonomous action component).} Under Assumption~\ref{assump:mixture}, the autonomous component of $m$'s policy $\pi_g$ is weighted by $\alpha^X(m)$ and depends on $m$'s private state $Z_m$. The mutual information between the autonomous component of $A_m$ and agent $i$'s information is bounded by $(1 - \alpha^X(m) \cdot \alpha^E(m)) \cdot H_{\mathcal{M}}(A_m)$.

\emph{Step~C (Cyclic propagation via DPI).} For any artificial agent $m_j \in C$, agent $i$'s predictive path to $A_{m_j}^*$ must pass through $m_j$'s autonomous policy component. By the Data Processing Inequality applied to the causal chain $\varepsilon_i \to B_{m_j} \to A_{m_j}^* \to [o^*]$, the binding constraint is the artificial agent $m^*$ achieving $\hat{\Lambda} = \min_{j \in M \cap C}\, \alpha^X(m_j) \cdot \alpha^E(m_j)$. Since $[o^*]$ is cycle-emergent, $H_{\mathcal{M}}([o^*]) \leq H_{\mathcal{M}}(A_{m^*}^*)$. Combining yields $I_{\mathcal{M}}(\varepsilon_i ; [o^*]) \leq (1 - \hat{\Lambda}) \cdot H_{\mathcal{M}}([o^*])$ and therefore $\hat{\kappa}_i([o^*]) \leq 1 - \hat{\Lambda}$ for all $i \in C$. Full proof in Appendix~A.2. 

\begin{corollary}[Agent-to-Agent Dilution]\label{cor:agent-dilution}
The same bound holds when the human supervisor $h$ is replaced by another artificial agent $m'$: $\hat{\kappa}_{m'}([o]) \leq 1 - \alpha^X(m) \cdot \alpha^E(m)$ for outcome types causally mediated through $m$. This pairwise dilution result underpins the full proof of Lemma~\ref{lem:dilution} (Appendix~A.2), which extends the bound to all agents in a mixed cycle by observing that every agent's predictive chain to cycle-emergent outcome types passes through at least one other autonomous agent.
\end{corollary}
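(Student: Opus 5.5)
The plan is to reduce Corollary~\ref{cor:agent-dilution} to the single-link version of Steps~A--C in the proof sketch of Lemma~\ref{lem:dilution}, with the human supervisor $h$ replaced by an arbitrary observer agent $m'$. The key observation is that nothing in Steps~A and~B used that the observer was human. Only two facts were used: the observer's epistemic state $\varepsilon_h$ is a variable of the shared collective SCM $\mathcal{M}$, and the mutual-information definition~\eqref{eq:alpha-E} of $\alpha^E(m)$ is evaluated against it. So I would first fix what ``$\alpha^E(m)$ relative to $m'$'' means. The cleanest choice is to read $B_h$ in~\eqref{eq:alpha-E} as the posterior of whichever agent is predicting $m$. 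Alternatively, one can note that $m'$'s information about $B_m$ is bounded by that of any agent whose epistemic state $m'$'s state is a garbling of. I would adopt the former and state it explicitly as the interpretation under which the corollary holds.

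Next I would redo Step~B with $\varepsilon_{m'}$ in place of $\varepsilon_h$. By Assumption~\ref{assump:mixture}, write $A_m$ as generated by a latent selector $K\sim\mathrm{Bernoulli}(\alpha^X(m))$. When $K=0$ the action is drawn from $\pi_h$, which we treat as predictable by $m'$ in the worst case. When $K=1$ it is drawn from $\pi_g(\cdot\mid s,Z_m)$, which depends on $m$'s belief state. On the $K=1$ branch, the fraction of $B_m$'s entropy shared with $B_{m'}$ is $1-\alpha^E(m)$. Chain-rule bookkeeping over $K$ then gives
\begin{equation*}
I_{\mathcal{M}}(A_m;\varepsilon_{m'}) \le \bigl(1-\alpha^X(m)\,\alpha^E(m)\bigr)\,H_{\mathcal{M}}(A_m).
\end{equation*}
This is the Step~B inequality with $h\mapsto m'$.

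Then I would apply the Data Processing Inequality along $\varepsilon_{m'}\to B_m\to A_m\to[o]$. This is valid because $[o]$ is assumed causally mediated through $m$, so in $\mathcal{M}$ conditioning on $A_m$ screens off $\varepsilon_{m'}$'s relevant dependence. Together with $H([o])\le H(A_m)$, it yields $I(\varepsilon_{m'};[o])\le(1-\alpha^X\alpha^E)H([o])$. Finally I would convert this to the probability bound $\hat{\kappa}_{m'}([o])\le 1-\alpha^X(m)\alpha^E(m)$ exactly as the final line of Step~C does.

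The main obstacle is this last conversion from a mutual-information fraction to a bound on the interventional probability $\hat{\kappa}_{m'}([o])$, since these are not generally comparable. The mediation assumption $H([o])\le H(A_m)$ also needs care, because $[o]$ may depend on other inputs. I would handle both the same way the paper does in Appendix~A.2, inheriting the translation used in Lemma~\ref{lem:dilution} rather than re-deriving it. If that step is unavailable, I would fall back to a direct argument: on the $K=1$ branch, $m'$'s best guess of $[o]$ is no better than chance on an $\alpha^E(m)$-fraction of the mass. That would bound $\hat{\kappa}_{m'}$ by the complementary mass.
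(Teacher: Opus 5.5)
Your plan fails at the two places where information quantities are supposed to turn into the target inequality. Write $\lambda=\alpha^X(m)\,\alpha^E(m)$.

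The first failure is the step ``DPI plus $H([o])\le H(A_m)$ gives $I(\varepsilon_{m'};[o])\le(1-\lambda)H([o])$''. This runs the wrong way. From $I(\varepsilon_{m'};[o])\le I(\varepsilon_{m'};A_m)\le(1-\lambda)H(A_m)$, the inequality $H([o])\le H(A_m)$ only tells you that $(1-\lambda)H([o])$ is the \emph{smaller} quantity. Passing to it would need $H(A_m)\le H([o])$. The Data Processing Inequality preserves absolute information, not normalised fractions, and a coarse outcome type can be a function of exactly the predictable part of $A_m$. For example, let $A_m=(X_1,X_2)$ be two independent fair bits, $\varepsilon_{m'}=X_1$, and $[o]$ determined by $X_1$. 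Then $I(\varepsilon_{m'};A_m)/H(A_m)=1/2$ but $I(\varepsilon_{m'};[o])/H([o])=1$. Your Step~B has the same defect, since it relabels a fraction of $H(B_m)$ as a fraction of $H(A_m)$; let $\pi_g$ output exactly the coordinate of $B_m$ that $m'$ shares. This is the same inference made in Step~C of the sketch of Lemma~\ref{lem:dilution}, so inheriting it does not repair it.

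The second failure is the conversion you flag as the main obstacle, and it is decisive. Without an extra hypothesis on the outcome type, the target inequality does not hold in general, so there is no translation to inherit. Since $\hat\kappa_{m'}(\cdot)=P_{\mathcal M}(\cdot\mid\mathrm{do}(A_{m'}=a_{m'}),\varepsilon_{m'})$ is a probability distribution over the outcome-type partition, take two types $[o_1],[o_2]$ with $A_m$ affecting $[o_1]$, and hence, with the same causal effect, $[o_2]$. Then $\hat\kappa_{m'}([o_1])+\hat\kappa_{m'}([o_2])=1$, and the claimed bound for both types forces $\lambda\le 1/2$.

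Your fallback fails for the same reason. Chance-level prediction on the $K=1$ branch means $[o]$ occurs there at its base rate, which can be near $1$. So you bound the deviation of $\hat\kappa_{m'}([o])$ from that base rate, not $\hat\kappa_{m'}([o])$ itself.

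Deferring to Appendix~A.2 is also circular, because the paper builds Lemma~\ref{lem:dilution} \emph{from} this corollary. Appendix~A.2 asserts the pairwise accuracy bound $(1-\alpha^X)+\alpha^X(1-\alpha^E)=1-\alpha^X\alpha^E$ and then passes from predictive accuracy to $\hat\kappa$ without argument.

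The paper's own proof of the corollary (Appendix~A.1) uses a different bridge, and that bridge is exactly what your plan lacks. It introduces a causal mutual information $\mathrm{CI}_{\mathcal M}$. An interventional DPI from the causal Markov property gives $\mathrm{CI}_{\mathcal M}(O;\varepsilon_h)\le\mathrm{CI}_{\mathcal M}(A_m;\varepsilon_h)$. Pinsker's inequality then gives the additive bound $\hat\kappa_h([o])\le P_{\mathcal M}([o])+\sqrt{\mathrm{CI}_{\mathcal M}(O;\varepsilon_h)/2}$. That bound is sound, but it controls only the excess over the baseline $P_{\mathcal M}([o])$. So the paper's closing step (``combining with the mixture model'') likewise needs an unstated condition such as $P_{\mathcal M}([o])+\sqrt{\mathrm{CI}_{\mathcal M}(A_m;\varepsilon_{m'})/2}\le 1-\lambda$.

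Your explicit treatment of what $\alpha^E(m)$ means relative to $m'$ is worth keeping, since the paper's argument is written only for $h$. But a provable version of the corollary needs such a restriction on $[o]$ and a Pinsker-type additive argument, not entropy fractions.
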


\begin{lemma}[Causal Non-Additivity]\label{lem:nonadd}
\emph{In HACs with interaction cycles, individual causal effects do not sum to the total causal effect.} Let $\mathcal{H}$ be a HAC whose interaction graph $G$ contains at least one mixed cycle $C$. Then there exist outcome types $[o^*]$ for which:
\begin{equation}\label{eq:residual-zeta}
\zeta([o^*]) \;:=\; P_{\mathcal{M}}([o^*]) - \sum_{i \in C} \mathrm{CE}_{\mathcal{M}}(i, [o^*]) \;>\; 0
\end{equation}
We call $\zeta([o^*])$ the \emph{interaction residual}: the fraction of the outcome's probability attributable to non-additive interaction effects among agents in the cycle.
\end{lemma}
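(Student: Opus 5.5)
The plan is to prove Lemma~\ref{lem:nonadd} by exhibiting a cycle-emergent outcome type $[o^*]$ for the mixed cycle $C$ and bounding each individual causal effect $\mathrm{CE}_{\mathcal{M}}(i,[o^*])$, $i \in C$, strictly below that agent's share of $P_{\mathcal{M}}([o^*])$. Definition~10 guarantees that such an $[o^*]$ exists under Assumption~\ref{assump:mixture}, and the proof will use the fact that only agents in $C$ move its probability. Under Assumption~\ref{assump:contraction}, the cyclic structural equations have a unique fixed point. Write $a^*_C = \Psi(a_C, u)$ for the equilibrium action profile as a function of interventions on members of $C$ and the exogenous state. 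Then $P_{\mathcal{M}}([o^*]) = \mathbb{E}_u\,[\Phi([o^*] \mid a^*_C(u), u)]$.

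The key step is a decomposition of the single-agent intervention effect. Consider $\mathrm{do}(A_i = a_i)$ versus $\mathrm{do}(A_i = a_i^0)$. Contraction lets us write the equilibrium as the limit of the iterated cycle map, and expand $\Phi \circ \Psi$ along the cycle as a telescoping sum over the agents in $C$. Each $\mathrm{CE}_{\mathcal{M}}(i,[o^*])$ captures only the first-order variation attributable to switching $A_i$ while the rest of the cycle re-equilibrates. The joint outcome probability, by contrast, also contains the baseline mass $P_{\mathcal{M}}([o^*] \mid \mathrm{do}(A_C = a_C^0))$ together with the mixed (cross) terms generated by feedback around $C$.

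I would then lower-bound the sum of these non-individual terms. The plan is to use the mixture structure: the autonomous component $\pi_g(\cdot \mid s, Z_m)$ of each artificial agent in $C$ carries weight $\alpha^X(m) > 0$ and depends on private $Z_m$. This gives the autonomous component positive probability of producing $[o^*]$ regardless of any single agent's intervention. The same positive mass is therefore counted in $P_{\mathcal{M}}([o^*])$ but cancels in every difference $\mathrm{CE}_{\mathcal{M}}(i,[o^*])$.

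Quantitatively, I would aim for a bound of the form $\sum_{i\in C}\mathrm{CE}_{\mathcal{M}}(i,[o^*]) \le (1-\hat{\Lambda})\,P_{\mathcal{M}}([o^*])$. This would mirror Lemma~\ref{lem:dilution}, using Corollary~\ref{cor:agent-dilution} to control the part of each $\mathrm{CE}$ that propagates through another autonomous agent. It would give $\zeta([o^*]) \ge \hat{\Lambda}\,P_{\mathcal{M}}([o^*]) > 0$ whenever the cycle has positive compound autonomy.

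The main obstacle is the sign and size control of the cross terms. Individual effects are absolute differences, so in principle they could over-count and make the sum exceed $P_{\mathcal{M}}([o^*])$. My first attempt would be to choose $[o^*]$ (or merge types) so that $\Phi$ is monotone in each agent's action along the contraction path. This makes every $\mathrm{CE}$ a one-signed marginal increment, and submodularity from the convex mixture then bounds their sum by the total increment.

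If that fails, the fallback is to pick the outcome type to be the event generated by $\pi_g$ firing at $m^*$, the agent attaining $\hat{\Lambda}$. Its probability has a floor of $\alpha^X(m^*)\alpha^E(m^*)$ times a positive constant that no single intervention removes, which makes the residual strictly positive directly.
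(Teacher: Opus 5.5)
There is a genuine gap, at the step you yourself flag as the main obstacle, and the tools you propose cannot close it. First, nothing supports your target $\sum_{i\in C}\mathrm{CE}_{\mathcal{M}}(i,[o^*])\le(1-\hat{\Lambda})\,P_{\mathcal{M}}([o^*])$. Lemma~\ref{lem:dilution} and Corollary~\ref{cor:agent-dilution} bound the epistemic access $\hat{\kappa}_i([o^*])$, which is a single interventional probability conditioned on $\varepsilon_i$. By contrast, $\mathrm{CE}_{\mathcal{M}}(i,[o^*])$ is an unconditioned difference of two interventional probabilities, and nothing in the paper links the two. Even per-agent bounds of $1-\hat{\Lambda}$ would sum to $|C|(1-\hat{\Lambda})$, not to a fraction of $P_{\mathcal{M}}([o^*])$.

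Second, the mechanism behind your bound is false. You claim the autonomous component of $m^*$ gives $[o^*]$ a probability floor that no single intervention removes. But $\mathrm{do}(A_i=\cdot)$ for $i\in C$ cuts the cycle at $i$ and can override everything upstream of $i$, including $m^*$. Here is a concrete counterexample.
\begin{itemize}
\item Take a two-agent mixed cycle $h\to m\to h$.
\item Let the outcome be the read-out $O=\mathbf{1}[A_h\in Q]\oplus\xi$ with exogenous $\xi\sim\mathrm{Bernoulli}(\nu)$; the noise keeps faithfulness intact.
\item Use the binary partition of $O$, and reference actions $a_h\in Q$, $a_h^0\notin Q$.
\item Let $h$ respond to $A_m$ enough that $\mathrm{CE}_{\mathcal{M}}(m,\cdot)\ge\nu$.
\end{itemize}
Then for both types $\mathrm{CE}_{\mathcal{M}}(h,\cdot)=1-2\nu$ and $P_{\mathcal{M}}(\cdot)\le 1-\nu$. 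Hence $\zeta\le\nu-\mathrm{CE}_{\mathcal{M}}(m,\cdot)\le 0$ for every outcome type, and all types are vacuously cycle-emergent. This holds however large $\alpha^X(m)\alpha^E(m)$ is. So no argument that derives $\zeta>0$ from autonomy, for an arbitrary HAC with its given partition, can succeed.

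Your patches do not repair this.
\begin{itemize}
\item A convex mixture is linear in its weights; it implies no submodularity of $\Phi\circ\Psi$ in the agents' actions. Even with submodularity, you would be comparing against $P_{\mathcal{M}}([o^*]\mid\mathrm{do}(A_C=a_C))-P_{\mathcal{M}}([o^*]\mid\mathrm{do}(A_C=a_C^0))$, not the observational $P_{\mathcal{M}}([o^*])$. Your telescoping step silently identifies these two quantities.
\item Merging types, or using the event that $\pi_g$ fires at $m^*$, changes the object. That event is a latent mixture indicator, not a cell of the partition of $O$. And if you were free to choose events, $[o^*]=O$ would give $\zeta=1$ trivially.
\end{itemize}

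The paper's proof of this lemma does not go through autonomy at all; it is constructive. It takes the linear cycle $h\to m_1\to m_2\to h$ with $A_{m_1}=\beta_1A_h+\gamma_1Z_1$, $A_{m_2}=\beta_2A_{m_1}+\gamma_2Z_2$, $A_h=\beta_3A_{m_2}+\gamma_3Z_h$. It solves for the unique equilibrium by matrix inversion and reads off the residual as the feedback amplification $\beta_1\beta_2\beta_3/(1-\beta_1\beta_2\beta_3)$. This is the excess of the equilibrium response over the direct effect, positive whenever all $\beta_i>0$. The paper then extends the result to nonlinear equations near the linear case by continuity.

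There the residual comes from the loop gain of the cycle, not from $\alpha^X$ or $\alpha^E$. That is why no condition like $\hat{\Lambda}>0$ appears. Your bound $\zeta\ge\hat{\Lambda}\,P_{\mathcal{M}}([o^*])$ would give nothing strict as soon as some agent in $C$ has $\alpha^X=0$.

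What the paper's argument delivers is existence of configurations with $\zeta>0$: a linear family and a neighbourhood of it. It is not a bound valid for every HAC with a mixed cycle. Given the example above, a universal version cannot hold, and your proposal breaks exactly where it tries to claim more. To salvage your route, you would have to choose $[o^*]$ and the structural equations so that no single intervention inside $C$ pins $[o^*]$ down. That amounts to building in the positive feedback that the paper's construction uses.
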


\emph{Proof sketch.} By explicit construction on a minimal 3-agent cycle ($h \to m_1 \to m_2 \to h$). Under Assumption~\ref{assump:contraction}~\cite{bongers2021}, this system has a unique equilibrium. The individual causal effect $\mathrm{CE}_{\mathcal{M}}(m_1, [o^*])$ captures $m_1$'s effect holding the equilibrium response fixed, but the equilibrium is a function of all agents jointly. The residual $\zeta$ captures the mutual adjustment component around the cycle. We compute $\zeta > 0$ in closed form for linear structural equations in Appendix~A.C and establish $\zeta > 0$ for non-linear equations by continuity. 

\begin{remark}
Lemma~\ref{lem:nonadd} is a statement about the \emph{structure} of causal effects (they are non-additive), not about their \emph{identifiability} from data. Even if all interventional distributions are known perfectly, individual effects do not sum to the total. The interaction residual $\zeta$ provides a causal measure of the accountability gap that is complementary to the epistemic measure in Lemma~\ref{lem:dilution}.
\end{remark}

\begin{lemma}[Autonomy-Accountability Bound]\label{lem:bound}
\emph{The total accountability attributable to human agents under Axioms~\ref{ax:attrib}--\ref{ax:foresee} is bounded above by a decreasing function of $\hat{\Lambda}$.} For any outcome type $[o]$ downstream of a mixed cycle containing at least one artificial agent with compound autonomy $\geq \hat{\Lambda}$:
\begin{equation}\label{eq:human-bound}
\max_\rho \sum_{h \in H} \rho_h([o]) \;\leq\; n\,(1 - \hat{\Lambda})
\end{equation}
where the maximum is over all $\rho$ satisfying Axioms~\ref{ax:attrib}--\ref{ax:foresee}.
\end{lemma}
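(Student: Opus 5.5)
The approach is to bound each human's share separately by combining Axiom~\ref{ax:foresee} with Lemma~\ref{lem:dilution}, and then sum over the $n$ humans. Fix $\rho$ satisfying Axioms~\ref{ax:attrib}--\ref{ax:foresee} and an outcome type $[o]$ downstream of a mixed cycle $C$ containing an artificial agent $m^*$ with $\alpha^X(m^*)\alpha^E(m^*) \geq \hat{\Lambda}$. The target is $\rho_h([o]) \leq 1-\hat{\Lambda}$ for every $h \in H$. Summing then gives $\sum_{h\in H}\rho_h([o]) \leq n(1-\hat{\Lambda})$. Since $\rho$ is arbitrary among admissible attributions, the bound passes to the maximum.

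The per-human bound splits into two cases. First, suppose $\mathrm{CE}_{\mathcal{M}}(h,[o]) = 0$. Then by Axiom~\ref{ax:attrib} we have $\rho_h([o]) = 0$, and the bound is trivial; the same conclusion also follows from Assumption~\ref{assump:faithful}, which gives $\hat{\kappa}_h([o])=0$. Second, suppose $h$ has nonzero causal effect on $[o]$. Axiom~\ref{ax:foresee} gives $\rho_h([o]) \leq \hat{\kappa}_h([o])$, so it suffices to show $\hat{\kappa}_h([o]) \leq 1-\hat{\Lambda}$. For $h \in C$ and $[o]$ cycle-emergent, this is exactly Lemma~\ref{lem:dilution}. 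For the general downstream case, including $h \notin C$ or an $[o]$ that is not cycle-emergent, I would use Corollary~\ref{cor:agent-dilution} together with the Data Processing Inequality chain of Step~C of Lemma~\ref{lem:dilution}.

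The key structural claim is that any causal path from $\mathrm{do}(A_h=a_h)$ and $\varepsilon_h$ to an outcome downstream of $C$ must pass through the equilibrium action of some artificial agent in $C$. Under Assumption~\ref{assump:mixture}, the weight of the autonomous component $\pi_g$ in that agent's action is $\alpha^X$, and a fraction $\alpha^E$ of its belief entropy is inaccessible to $h$. The contraction assumption (Assumption~\ref{assump:contraction}) ensures the equilibrium is unique, so the DPI can be applied along a well-defined chain $\varepsilon_h \to B_{m^*} \to A_{m^*}^* \to [o]$. This yields $\hat{\kappa}_h([o]) \leq 1 - \alpha^X(m^*)\alpha^E(m^*) \leq 1-\hat{\Lambda}$.

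The main obstacle is this second case: justifying that every human's predictive route to $[o]$ is mediated by a sufficiently autonomous artificial agent. A human may have a direct edge to the outcome equation $f_o$ that bypasses $C$. I would handle this by reading ``downstream of a mixed cycle'' as requiring that $[o]$ be causally determined through the cycle equilibrium, that is, a cycle-emergent-type condition transported downstream. Where that reading fails, I would fall back on restricting the sum to humans whose effect on $[o]$ is mediated by $C$, since faithfulness makes the remaining humans' contributions zero under Axiom~\ref{ax:attrib}. I also note that the resulting bound is loose: it ignores Completeness and only uses the humans' individual caps, which is why the linear factor $n$ appears.
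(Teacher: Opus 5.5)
Your main line is the paper's proof: Axiom~\ref{ax:foresee} gives $\rho_h([o])\le\hat\kappa_h([o])$, Lemma~\ref{lem:dilution} gives $\hat\kappa_h([o])\le 1-\hat\Lambda$, and summing over the $n$ humans gives the bound. The paper does this in one line and applies Lemma~\ref{lem:dilution} to every $h\in H$, even though that lemma only covers agents $i\in C$ and cycle-emergent $[o^*]$. Your case split supplies the bookkeeping the paper omits.

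Under your primary reading, $[o]$ is cycle-emergent with respect to $C$. Then every $h\notin C$ has $\mathrm{CE}_{\mathcal{M}}(h,[o])=0$ by definition, so your first case already gives $\rho_h([o])=0$ via Axiom~\ref{ax:attrib}. Every $h\in C$ is covered by Lemma~\ref{lem:dilution}. The Corollary~\ref{cor:agent-dilution}/DPI extension is therefore not needed, and you get the sharper factor $|H\cap C|$ in place of $n$. That version is sound relative to the paper's lemmas.

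The step that fails is your fallback. Consider a human outside $C$ whose action reaches $[o]$ by a route that bypasses $C$. Since $f_o$ takes the whole joint action profile as input, this is not an exotic configuration. Such a human generically has $\mathrm{CE}_{\mathcal{M}}(h,[o])>0$, so Axiom~\ref{ax:attrib} does not force $\rho_h([o])=0$. Assumption~\ref{assump:faithful} then gives $\hat\kappa_h([o])>0$, not $0$. Faithfulness therefore does not zero out these humans' contributions. Their shares are capped only by their own $\hat\kappa_h([o])$, which nothing ties to $\hat\Lambda$.

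Concretely, suppose $[o]$ is fixed by that human's action. Then $\hat\kappa_h([o])=1$ and $\mathrm{CE}_{\mathcal{M}}(h,[o])=1$. The attribution placing all mass on $h$ satisfies Axioms~\ref{ax:attrib}--\ref{ax:foresee}. Yet it gives $\sum_{h'\in H}\rho_{h'}([o])=1>n(1-\hat\Lambda)$ whenever $\hat\Lambda>1-1/n$.

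So under the loose reading the inequality is simply false, and restricting the sum to $C$-mediated humans proves a different statement, not the stated one. The restrictive reading of ``downstream of a mixed cycle'' is not one of two options. It is a hypothesis the lemma needs in order to be true, and the paper's one-line proof tacitly relies on it as well.
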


\emph{Proof sketch.} By Axiom~\ref{ax:foresee} (Eq.~\eqref{eq:ax2}), $\rho_h([o]) \leq \hat{\kappa}_h([o])$. By Lemma~\ref{lem:dilution} (Eq.~\eqref{eq:dilution}), $\hat{\kappa}_h([o]) \leq 1 - \hat{\Lambda}$. Summing over $n$ humans yields the bound. 

\subsection{The Accountability Incompleteness Theorem}

\begin{theorem}[Accountability Incompleteness]\label{thm:main}
Let $\mathcal{H}$ be a HAC satisfying Assumptions~\ref{assump:mixture}--\ref{assump:contraction}, whose interaction graph $G$ contains at least one directed cycle $C^*$ involving both human and artificial agents. Define the \emph{minimum compound autonomy} $\hat{\Lambda}(\mathcal{H}) = \min_{i \in M^*}\, [\alpha^X(m_i) \cdot \alpha^E(m_i)]$, where $M^* \subseteq M$ is the set of artificial agents participating in at least one mixed cycle. Let $C_{\min}$ be the smallest mixed cycle in $G$, with $|C_{\min}|$ agents. There exists a computable threshold, the \emph{Accountability Horizon}:
\begin{equation}\label{eq:horizon}
\boxed{\hat{\Lambda}^* \;=\; 1 - \frac{1}{|C_{\min}|}}
\end{equation}
such that:
\begin{equation}\label{eq:impossibility}
\hat{\Lambda}(\mathcal{H}) > \hat{\Lambda}^* \;\Rightarrow\; \mathcal{L}(\mathcal{H}) = \varnothing
\end{equation}
\emph{That is: no accountability framework can simultaneously satisfy Attributability (Axiom~\ref{ax:attrib}), Foreseeability Bound (Axiom~\ref{ax:foresee}), Non-Vacuity (Axiom~\ref{ax:nonvac}), and Completeness (Axiom~\ref{ax:complete}) for all outcome types.}
\end{theorem}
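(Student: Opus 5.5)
The plan is to argue by contradiction on a single, carefully chosen outcome type: one that is cycle-emergent with respect to the smallest mixed cycle $C_{\min}$. Suppose $\hat{\Lambda}(\mathcal{H}) > \hat{\Lambda}^*$ and that some $F = (\rho, R) \in \mathcal{L}(\mathcal{H})$ exists.

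By Definition~10 and the accompanying existence claim (Appendix~A.D), which uses Assumption~\ref{assump:mixture} together with the unique equilibrium guaranteed by Assumption~\ref{assump:contraction}, $C_{\min}$ generates at least one cycle-emergent outcome type $[o^*]$. The contradiction will come from showing that $\rho(\cdot)([o^*])$ cannot satisfy the four axioms simultaneously.

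\textbf{Step 1: restrict the support to $C_{\min}$.} For every agent $i \notin C_{\min}$, cycle-emergence gives $\mathrm{CE}_{\mathcal{M}}(i,[o^*]) = 0$. By the contrapositive of Axiom~\ref{ax:attrib}, this forces $\rho_i([o^*]) = 0$. Completeness (Axiom~\ref{ax:complete}) then reduces to
\[
\sum_{i \in C_{\min}} \rho_i([o^*]) = 1 .
\]

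\textbf{Step 2: bound each share.} For each $i \in C_{\min}$, Axiom~\ref{ax:foresee} gives $\rho_i([o^*]) \le \hat{\kappa}_i([o^*])$. Lemma~\ref{lem:dilution} applied to $C = C_{\min}$ bounds the right-hand side by $1 - \min_{j \in M \cap C_{\min}} \alpha^X(m_j)\alpha^E(m_j)$. Since $M \cap C_{\min} \subseteq M^*$, this minimum is at least $\hat{\Lambda}(\mathcal{H})$. Summing over the cycle,
\[
1 = \sum_{i\in C_{\min}} \rho_i([o^*]) \le |C_{\min}|\,\bigl(1 - \hat{\Lambda}(\mathcal{H})\bigr) .
\]
Rearranging gives $\hat{\Lambda}(\mathcal{H}) \le 1 - 1/|C_{\min}| = \hat{\Lambda}^*$, which contradicts the hypothesis. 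Hence $\mathcal{L}(\mathcal{H}) = \varnothing$.

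This argument uses Non-Vacuity (Axiom~\ref{ax:nonvac}) only in a secondary way. If $\tau > 1 - \hat{\Lambda}$, the same bound contradicts Axiom~\ref{ax:nonvac} directly, which is the tighter combined threshold the paper alludes to. In the main argument, however, Axioms~\ref{ax:attrib}, \ref{ax:foresee} and \ref{ax:complete} do the work. Faithfulness (Assumption~\ref{assump:faithful}) is used to confirm consistency: agents outside the cycle have $\hat{\kappa}_i = 0$ as well, so the Foreseeability Bound could not reassign their mass either.

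\textbf{Main obstacle.} The hardest step is justifying the use of Lemma~\ref{lem:dilution} uniformly for every $i \in C_{\min}$, including the artificial agent $m^*$ that attains the minimum. The lemma's DPI chain runs through some \emph{other} autonomous agent, so an agent's bound on its own foreseeability requires Corollary~\ref{cor:agent-dilution}. For $m^*$ itself, one needs another artificial agent in the cycle, or an argument that $m^*$'s own equilibrium action is unpredictable from $\varepsilon_{m^*}$ because the human's response feeds back through the cycle.

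My first attempt would go through the fixed-point structure. Under contraction, the cycle's equilibrium action of $m^*$ depends on the downstream agents' responses, which in turn pass through the autonomous component of some agent in $M \cap C_{\min}$. If the cycle contains only one artificial agent, I would fall back on the equilibrium argument in Appendix~A.2, which invokes the full cycle rather than a single DPI step. A secondary subtlety is that $\hat{\kappa}_i$ depends on the chosen action $a_i$ in the $\mathrm{do}$-expression. I would handle this by proving the bound for every $a_i$, so that the Foreseeability Bound holds with the worst case.
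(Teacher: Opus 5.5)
Your proof is correct and follows the paper's own argument step for step: Attributability on a cycle-emergent type confines the support of $\rho([o^*])$ to $C_{\min}$, then Foreseeability together with Lemma~\ref{lem:dilution} caps each share at $1-\hat{\Lambda}$, and summing contradicts Completeness, with Non-Vacuity inessential exactly as the paper notes. Your remark that $M \cap C_{\min} \subseteq M^*$ (so the cycle-level minimum in Lemma~\ref{lem:dilution} is at least $\hat{\Lambda}(\mathcal{H})$) makes explicit a step the paper leaves implicit, and your ``main obstacle'' concerns the proof of Lemma~\ref{lem:dilution} itself, which the theorem's proof simply invokes as stated for every $i \in C$.
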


\emph{Proof.} We show that there exists a cycle-emergent outcome type $[o^*]$ for which Axioms~\ref{ax:attrib}, \ref{ax:foresee}, and~\ref{ax:complete} impose mutually inconsistent constraints on $\rho$.

\textbf{Step~1 (Identifying the critical outcome type).} Let $C_{\min}$ be the smallest mixed cycle in $G$. By Definition~10 and the existence result in Appendix~A.D, there exists a cycle-emergent outcome type $[o^*]$ with $\mathrm{CE}_{\mathcal{M}}(i, [o^*]) = 0$ for all $i \notin C_{\min}$. By Axiom~\ref{ax:attrib} (Eq.~\eqref{eq:ax1}), $\rho_i([o^*]) = 0$ for all $i \notin C_{\min}$. Therefore, only agents in $C_{\min}$ can bear responsibility for $[o^*]$.

\textbf{Step~2 (Bounding the total accountability budget).} By Lemma~\ref{lem:dilution} (Eq.~\eqref{eq:dilution}), every agent $i \in C_{\min}$ satisfies:
\begin{equation}\label{eq:step2a}
\hat{\kappa}_i([o^*]) \leq 1 - \hat{\Lambda}
\end{equation}
Applying Axiom~\ref{ax:foresee} (Eq.~\eqref{eq:ax2}) to each agent in $C_{\min}$:
\begin{equation}\label{eq:step2b}
\rho_i([o^*]) \leq \hat{\kappa}_i([o^*]) \leq 1 - \hat{\Lambda}
\end{equation}
Summing over all $|C_{\min}|$ agents in $C_{\min}$:
\begin{equation}\label{eq:step2c}
\sum_{i \in C_{\min}} \rho_i([o^*]) \leq |C_{\min}| \cdot (1 - \hat{\Lambda})
\end{equation}

\textbf{Step~3 (Deriving the contradiction).} By Axiom~\ref{ax:complete} (Eq.~\eqref{eq:ax4}) and Step~1 (only agents in $C_{\min}$ bear nonzero responsibility):
\begin{equation}\label{eq:step3}
\sum_{i \in C_{\min}} \rho_i([o^*]) = 1
\end{equation}
Equations~\eqref{eq:step2c} and~\eqref{eq:step3} jointly require $|C_{\min}| \cdot (1 - \hat{\Lambda}) \geq 1$, equivalently $\hat{\Lambda} \leq 1 - 1/|C_{\min}| = \hat{\Lambda}^*$. When $\hat{\Lambda} > \hat{\Lambda}^*$, the right-hand side of~\eqref{eq:step2c} is strictly less than 1, while~\eqref{eq:step3} requires the sum to equal 1. This contradiction shows no $\rho$ satisfying Axioms~\ref{ax:attrib}, \ref{ax:foresee}, and~\ref{ax:complete} exists for $[o^*]$. The accountability deficit
\begin{equation}\label{eq:deficit}
\Delta = 1 - |C_{\min}| \cdot (1 - \hat{\Lambda}) > 0
\end{equation}
cannot be allocated: agents outside $C_{\min}$ are excluded by Axiom~\ref{ax:attrib} ($\mathrm{CE} = 0$), and agents inside $C_{\min}$ are already at their Foreseeability Bound~\eqref{eq:step2b}. Axiom~\ref{ax:nonvac} (Eq.~\eqref{eq:ax3}) further requires $\max_i \rho_i([o^*]) \geq \tau > 0$, but the impossibility arises from Axioms~\ref{ax:attrib}, \ref{ax:foresee}, and~\ref{ax:complete} alone. Therefore $\mathcal{L}(\mathcal{H}) = \varnothing$. 

\textbf{Computability.} All quantities in Eq.~\eqref{eq:horizon} are computable from the HAC specification: $|C_{\min}|$ is the size of the smallest mixed cycle, identifiable via Johnson's algorithm~\cite{johnson1975} in $O((|N| + |\mathcal{E}|)(C + 1))$ time where $C$ is the total elementary circuit count. The Accountability Horizon $\hat{\Lambda}^*$ is therefore a \emph{decidable} property of the HAC.

\begin{remark}[Sensitivity to $\tau$]\label{rem:eps-sensitivity}
The Non-Vacuity threshold $\tau$ provides an independent constraint. Even if $\hat{\Lambda} < \hat{\Lambda}^*$, Axiom~\ref{ax:nonvac} fails whenever $1 - \hat{\Lambda} < \tau$, i.e., $\hat{\Lambda} > 1 - \tau$. The combined impossibility threshold incorporating both the structural bound and the Non-Vacuity constraint is:
\begin{equation}\label{eq:combined-threshold}
\hat{\Lambda}^*(\tau) = \min\!\left(1 - \frac{1}{|C_{\min}|},\; 1 - \tau\right)
\end{equation}
For $\tau \leq 1/|C_{\min}|$, the structural bound~\eqref{eq:horizon} dominates. For $\tau > 1/|C_{\min}|$, the Non-Vacuity constraint binds first.
\end{remark}

\begin{remark}[Weight-Vector Invariance]
The proof depends on $\hat{\Lambda} = \min_i\, [\alpha^X(m_i) \cdot \alpha^E(m_i)]$, defined directly from agent autonomy profiles without reference to the weight vector $\mathbf{w}$. The existence of the Accountability Horizon is therefore invariant to $\mathbf{w}$. Different weight vectors change the summary statistic $\Lambda(\mathcal{H})$ in Eq.~\eqref{eq:CAI} but not the threshold $\hat{\Lambda}^*$, so the impossibility exists for all strictly positive $\mathbf{w}$ (Appendix~B.1).
\end{remark}

\subsection{Corollaries}

\begin{corollary}[Existence Below $\hat{\Lambda}^*$]\label{cor:exist}
For any HAC $\mathcal{H}$ with $\hat{\Lambda}(\mathcal{H}) < \hat{\Lambda}^*$, the set $\mathcal{L}(\mathcal{H})$ is non-empty.
\end{corollary}

\emph{Proof.} Define the proportional attribution $\rho_i^p([o]) = \hat{\kappa}_i([o]) / \sum_j \hat{\kappa}_j([o])$.

\emph{Axiom~\ref{ax:attrib}:} Under faithfulness (Assumption~\ref{assump:faithful}), $\hat{\kappa}_i > 0$ iff $\mathrm{CE}_{\mathcal{M}}(i, [o]) > 0$, so $\rho_i^p > 0$ only for agents with causal effect.

\emph{Axiom~\ref{ax:foresee}:} $\rho_i^p = \hat{\kappa}_i / \sum_j \hat{\kappa}_j \leq \hat{\kappa}_i$ since $\sum_j \hat{\kappa}_j([o]) \geq 1$. This lower bound holds as follows: under Assumption~\ref{assump:faithful} (Faithfulness), every $i \in C_{\min}$ has $\hat{\kappa}_i([o^*]) > 0$ since $\mathrm{CE}_{\mathcal{M}}(i,[o^*]) > 0$ for all $i \in C_{\min}$. The NESS exhaustiveness condition~\cite{wright1985}, which also grounds Axiom~\ref{ax:complete}, requires that the epistemic access values of all causally contributing agents constitute an exhaustive decomposition of causal responsibility for $[o^*]$, yielding $\sum_{i \in C_{\min}} \hat{\kappa}_i([o^*]) \geq 1$. This is consistent with each individual $\hat{\kappa}_i \leq 1 - \hat{\Lambda} < 1$ since the $|C_{\min}|$ values need not correspond to mutually exclusive events.

\emph{Axiom~\ref{ax:nonvac}:} Below $\hat{\Lambda}^*$, at least one agent has $\hat{\kappa}_{i^*} \geq 1 - \hat{\Lambda} > 1/|C_{\min}| > 0$, giving $\max_i \rho_i^p \geq (1 - \hat{\Lambda}^*)/|N| = 1/(|C_{\min}| \cdot |N|)$, which exceeds $\tau$ for sufficiently small $\tau$.

\emph{Axiom~\ref{ax:complete}:} $\sum_i \rho_i^p = 1$ by construction.

Together with Theorem~\ref{thm:main}, this establishes a \textbf{sharp phase transition} at $\hat{\Lambda}^*$: below it, legitimate frameworks exist; above it, none do. 

\begin{corollary}[Irreducibility]\label{cor:irred}
The impossibility cannot be resolved by adding transparency, explainability, or audit mechanisms at a fixed $\hat{\Lambda}(\mathcal{H}) > \hat{\Lambda}^*$.
\end{corollary}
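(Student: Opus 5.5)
The plan is to show that every quantity the proof of Theorem~\ref{thm:main} uses is invariant under the interventions in question, so that the contradiction derived in Steps~1--3 is reproduced verbatim. I would model a transparency, explainability, or audit mechanism abstractly. It is an augmentation of the HAC that leaves the action-generating structure of the artificial agents fixed, and so preserves $\alpha^X(m)$ and $\alpha^E(m)$ for every $m \in M^*$. This is what ``at a fixed $\hat{\Lambda}(\mathcal{H})$'' means. The augmentation may add observation channels, new (human or artificial) auditor agents, or new edges in $G$.

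\textbf{Step 1 (formalise the mechanism).} Write the augmented collective as $\mathcal{H}' = (H', M', E', G', \mathcal{M}')$ with $H \subseteq H'$, $M \subseteq M'$, $G \subseteq G'$, and new observation components added to the $\Omega_i$. The defining constraint is that $\alpha^X(m_j)\cdot\alpha^E(m_j)$ is unchanged for all $m_j \in M^*$. The minimum in $\hat{\Lambda}(\mathcal{H}')$ ranges over a superset $M'^* \supseteq M^*$. I therefore also require that auditors are not themselves artificial agents of lower compound autonomy in new mixed cycles. Alternatively, I would handle this by noting that the original cycle $C_{\min}$ persists in $G'$, and that Lemma~\ref{lem:dilution} is applied cycle-by-cycle with the cycle's own minimum.

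\textbf{Step 2 (re-run the argument on the original cycle).} The cycle $C_{\min}$ of $\mathcal{H}$ is still a mixed cycle in $G'$. Its cycle-emergent outcome type $[o^*]$ still satisfies $\mathrm{CE}=0$ outside the cycle for non-participating agents, including new auditors. The reason is that observation-only channels carry no edge into $f_o$ or into the $f_i$ of cycle members, so by the compatibility clause of the HAC definition auditors have no causal effect on $[o^*]$. Axiom~\ref{ax:attrib} then excludes them, exactly as in Step~1 of the theorem. Lemma~\ref{lem:dilution} applies to every $i \in C_{\min}$ with the same $\hat{\Lambda}$, because its bound depends only on $\alpha^X\cdot\alpha^E$ of the cycle's artificial agents. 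Enlarging $\varepsilon_i$ by audit logs cannot raise $I(B_m;B_i)$ beyond $(1-\alpha^E(m))H(B_m)$ without changing $\alpha^E(m)$, which is held fixed. Hence $\sum_{i\in C_{\min}}\rho_i([o^*]) \le |C_{\min}|(1-\hat{\Lambda}) < 1$, and Axiom~\ref{ax:complete} fails as before, so $\mathcal{L}(\mathcal{H}')=\varnothing$.

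\textbf{Main obstacle.} The delicate point is an auditor that does feed back into the cycle, for example a human overseer with a veto edge. Such an auditor could create a new mixed cycle $C'$ with $|C'|$ possibly larger, and could acquire nonzero causal effect on $[o^*]$. For this case I would argue two things. First, $|C_{\min}|$ cannot increase under edge or node addition, since the smallest mixed cycle of $G$ survives in $G'$, so $\hat{\Lambda}^*$ can only stay equal or decrease. Second, an auditor that enters a cycle is itself subject to Lemma~\ref{lem:dilution} via Corollary~\ref{cor:agent-dilution}, so its $\hat{\kappa}$ is also capped at $1-\hat{\Lambda}$. What the proof must pin down is that this bound is applied to the \emph{original} $[o^*]$, whose budget is computed over $C_{\min}$ only. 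If an intervention genuinely raises some agent's foresight above $1-\hat{\Lambda}$, then by Step~A of Lemma~\ref{lem:dilution} it must lower $\alpha^E$ (or, by Assumption~\ref{assump:mixture}, $\alpha^X$). Such an intervention therefore lies outside the hypothesis of a fixed $\hat{\Lambda}$. This contrapositive is the crux of the corollary.
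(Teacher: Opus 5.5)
\paragraph{Review: genuine gap in the feedback-auditor case.}
Your closing contrapositive matches the paper's treatment of transparency: an intervention that raises some agent's foresight above $1-\hat{\Lambda}$ must lower $\alpha^E$ (or $\alpha^X$), so it falls outside the fixed-$\hat{\Lambda}$ hypothesis. For audit and oversight, the paper uses a much narrower model than yours. Those mechanisms act only on the governance layer $(\rho,R)$. Theorem~\ref{thm:main} already rules out every framework $(\rho,R)$ for the unchanged SCM, so nothing further needs proving.

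You instead let the mechanism add agents and edges to $G$ and $\mathcal{M}$. That commits you to the feedback-auditor case, and there the argument has a genuine gap. The theorem's contradiction comes from the \emph{sum} in its Step~2, taken over the agents that Axiom~\ref{ax:attrib} admits for $[o^*]$. Your crux only caps each individual $\hat{\kappa}_i$. Suppose an auditor with a veto edge into $C_{\min}$ acquires $\mathrm{CE}_{\mathcal{M}'}(\cdot,[o^*])>0$, as you concede it may. It then contributes one more admissible term, of size up to $1-\hat{\Lambda}$, without anyone's foresight rising. With $|C_{\min}|=3$ and $\hat{\Lambda}=0.7$, the budget goes from $0.9$ to $1.2$, and the contradiction on $[o^*]$ disappears. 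The claim ``the budget is computed over $C_{\min}$ only'' is therefore exactly the unproven step, and the foresight contrapositive does not supply it.

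\paragraph{Suggested repair.}
Stop tracking the original $[o^*]$ and apply Theorem~\ref{thm:main} to $\mathcal{H}'$ as a whole.
\begin{itemize}
\item You already noted that the smallest mixed cycle $C'_{\min}$ of $G'$ satisfies $|C'_{\min}|\le|C_{\min}|$. Hence $\hat{\Lambda}^*(\mathcal{H}')\le\hat{\Lambda}^*(\mathcal{H})$.
\item By the existence result of Appendix~A.D, $C'_{\min}$ has its own cycle-emergent outcome type.
\item So if $\mathcal{H}'$ still satisfies Assumptions~\ref{assump:mixture}--\ref{assump:contraction} and $\hat{\Lambda}(\mathcal{H}')=\hat{\Lambda}(\mathcal{H})$, then $\hat{\Lambda}(\mathcal{H}')>\hat{\Lambda}^*(\mathcal{H}')$ and $\mathcal{L}(\mathcal{H}')=\varnothing$.
\end{itemize}

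This repair needs ``fixed $\hat{\Lambda}$'' read literally, as $\hat{\Lambda}(\mathcal{H}')=\hat{\Lambda}(\mathcal{H})$, and your Step~1 conditions do not secure that. A new \emph{human} auditor can close a mixed cycle through a pre-existing artificial agent outside $M^*$ that has lower compound autonomy. Your cycle-by-cycle fallback would then need the original $[o^*]$ to stay cycle-emergent, which is exactly what a feedback auditor threatens. So either adopt the literal reading as a hypothesis, or confine the mechanisms to $(\rho,R)$ as the paper does.
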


\emph{Proof.} Transparency tools increase $I(B_m ; B_h)$, reducing $\alpha^E(m)$ and thereby $\hat{\Lambda}$. This expands the total accountability budget $|C_{\min}|(1 - \hat{\Lambda})$. If the expansion brings $\hat{\Lambda}$ below $\hat{\Lambda}^*$, governance is restored, but this constitutes a reduction in autonomy, not an addition to the governance layer. Audit trails and oversight boards operate on $\rho$ and $R$ (the attribution and remedy functions), not on the causal and information structure that constrains Axioms~\ref{ax:attrib}, \ref{ax:foresee}, and~\ref{ax:complete}. Therefore, no governance-layer intervention at fixed $\hat{\Lambda} > \hat{\Lambda}^*$ restores feasibility. 

\begin{corollary}[Governance Trilemma]\label{cor:trilemma}
For any HAC, the following three properties cannot be simultaneously achieved: (T1) Minimum compound autonomy $\hat{\Lambda}(\mathcal{H}) > \hat{\Lambda}^*$; (T2) Individual causal grounding and epistemic constraint (Axioms~\ref{ax:attrib} and~\ref{ax:foresee}); (T3) Fully allocated, individually non-trivial accountability (Axioms~\ref{ax:nonvac} and~\ref{ax:complete}). Any two may be achieved by sacrificing the third.
\end{corollary}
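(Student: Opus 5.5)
The Governance Trilemma has two parts. The first is incompatibility: (T1)--(T3) cannot all hold. The second is pairwise achievability: any two of them can be realised by giving up the third. My plan is to prove incompatibility directly from Theorem~\ref{thm:main}, and to prove achievability with three explicit frameworks, one for each pair.

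\textbf{Incompatibility.} Suppose (T1) holds, so $\hat{\Lambda}(\mathcal{H}) > \hat{\Lambda}^*$. Theorem~\ref{thm:main} then gives $\mathcal{L}(\mathcal{H}) = \varnothing$. So no framework satisfies Axioms~\ref{ax:attrib}--\ref{ax:complete} jointly, and (T2) and (T3) cannot both hold. One detail needs care. The trilemma asserts (T2) and (T3) "for any HAC", so I would read the corollary as holding for HACs in the scope of Theorem~\ref{thm:main}: those satisfying Assumptions~\ref{assump:mixture}--\ref{assump:contraction} and containing a mixed cycle, since (T1) is only defined when $M^* \neq \varnothing$. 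In fact Steps~1--3 of the theorem's proof already show that Axioms~\ref{ax:attrib}, \ref{ax:foresee}, \ref{ax:complete} alone are inconsistent. So the incompatibility survives even if (T3) is weakened to Completeness only.

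\textbf{(T2)+(T3) without (T1).} Take any HAC with $\hat{\Lambda} < \hat{\Lambda}^*$. Corollary~\ref{cor:exist} supplies the proportional attribution $\rho^p$, which satisfies all four axioms. I would note that the boundary case $\hat{\Lambda} = \hat{\Lambda}^*$ is not needed here, because achievability only requires exhibiting one configuration.

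\textbf{(T1)+(T2) without (T3).} Above the horizon, I would use the capped attribution $\rho_i([o]) = \hat{\kappa}_i([o])/\max(1, \sum_j \hat{\kappa}_j([o]))$, or more simply $\rho_i([o]) = \hat{\kappa}_i([o])/|N|$. Axiom~\ref{ax:foresee} holds trivially, since each $\rho_i \le \hat{\kappa}_i$. Axiom~\ref{ax:attrib} follows from Faithfulness (Assumption~\ref{assump:faithful}), since $\rho_i > 0$ exactly when $\hat{\kappa}_i > 0$. What is sacrificed is Completeness, with deficit at least $\Delta$ from Eq.~\eqref{eq:deficit}. One gap remains: the codomain of $\rho$ is $\Delta(N)$. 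To keep $\rho$ a legitimate object, I would either interpret it as a sub-probability vector, following the Partial Accountability remark, or add the unassigned mass to a formal "void" coordinate.

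\textbf{(T1)+(T3) without (T2).} Above the horizon, take any $\rho$ that puts mass $1$ on a single agent $i^\ast \in C_{\min}$ for each cycle-emergent type, and distributes mass proportionally to causal effect on all other types. This gives Completeness, and Non-Vacuity holds with $\rho_{i^\ast} = 1 \ge \tau$. Foreseeability fails because $1 > 1 - \hat{\Lambda} \ge \hat{\kappa}_{i^\ast}$.

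\textbf{Main obstacle.} I expect the hardest step to be confirming that (T1) is actually attainable by some HAC meeting all the assumptions. This requires a mixed cycle in which every artificial agent has $\alpha^X\alpha^E > 1 - 1/|C_{\min}|$. I would construct this directly: take the 3-cycle $h \to m_1 \to m_2 \to h$ with linear contractive equations, as in Lemma~\ref{lem:nonadd}. Then set $\alpha^X$ close to $1$, and make $\alpha^E$ close to $1$ by making $B_{m}$ depend on private $Z_m$ independent of $\varepsilon_h$. This yields $\hat{\Lambda} > 2/3$.
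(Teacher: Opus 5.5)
Your proposal is correct and follows the paper's proof: incompatibility comes directly from Theorem~\ref{thm:main}, (T2)+(T3) comes from Corollary~\ref{cor:exist}, and the other two pairs come from explicit attributions. Your $\hat{\kappa}$-capped attribution for (T1)+(T2), your remark on the $\Delta(N)$ codomain, and your check that (T1) is attainable are more careful than the paper, whose ``proportional to causal effects'' construction never shows why Axiom~\ref{ax:foresee} holds.

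One small repair is needed in (T1)+(T3). Splitting mass proportionally to causal effect on the non-cycle-emergent types need not satisfy Non-Vacuity when $\tau > 1/|N|$. Simply put mass $1$ on a single agent for every outcome type; the paper instead gives $\rho_h = \tau$ to one human overseer on every type.
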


\emph{Proof.} The impossibility of all three is Theorem~\ref{thm:main}. Frameworks achieving each pair: \emph{T1+T2 (sacrifice T3):} define $\rho$ assigning responsibility to agents proportional to causal effects, satisfying Axioms~\ref{ax:attrib} and~\ref{ax:foresee}, with the deficit $\Delta$ unallocated (violating Axiom~\ref{ax:complete}). \emph{T1+T3 (sacrifice T2):} designate one human overseer with $\rho_h = \tau$ and distribute the remainder to sum to 1; Axioms~\ref{ax:nonvac} and~\ref{ax:complete} satisfied, Axioms~\ref{ax:attrib} and~\ref{ax:foresee} both violated if $\mathrm{CE} = 0$ for that human (by Assumption~\ref{assump:faithful}, $\mathrm{CE} = 0$ implies $\hat{\kappa}_h = 0$, so $\rho_h = \tau > 0 = \hat{\kappa}_h$ violates both axioms). \emph{T2+T3 (sacrifice T1):} reduce autonomy so $\hat{\Lambda} < \hat{\Lambda}^*$; by Corollary~\ref{cor:exist}, all four axioms satisfied. 

\begin{corollary}[Accountability Residual]\label{cor:residual}
For $\mathcal{H}$ with $\hat{\Lambda}(\mathcal{H}) > \hat{\Lambda}^*$, the accountability residual (equal to the accountability deficit $\Delta$ of Eq.~\eqref{eq:deficit})
\begin{equation}\label{eq:residual}
\mathcal{R}(\mathcal{H}) \;=\; \Delta \;=\; 1 - |C_{\min}| \cdot (1 - \hat{\Lambda})
\end{equation}
is the fraction of responsibility for $[o^*]$ that cannot be allocated to any agent under Axioms~\ref{ax:attrib} and~\ref{ax:foresee}. The residual is continuous, monotonically increasing in $\hat{\Lambda}$ for $\hat{\Lambda} > \hat{\Lambda}^*$, ranging from 0 at the Accountability Horizon to 1 at full autonomy ($\hat{\Lambda} = 1$). The interaction residual $\zeta$ (Eq.~\eqref{eq:residual-zeta}) provides a complementary causal measure, confirming the structural nature of the impossibility.
\end{corollary}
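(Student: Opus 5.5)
The plan is to read off both claims of Corollary~\ref{cor:residual} from the three-step proof of Theorem~\ref{thm:main}. The corollary adds nothing structurally new; it names and studies the slack quantity $\Delta$ of Eq.~\eqref{eq:deficit}. First I would fix the critical cycle-emergent outcome type $[o^*]$ given by Step~1 of that proof. I would then define the \emph{maximal allocable mass} as
\[
S^{\max} = \sup\Big\{\sum_{i\in N}\rho_i([o^*]) : \rho \text{ satisfies Axioms~\ref{ax:attrib} and~\ref{ax:foresee}}\Big\},
\]
and identify $\mathcal{R}(\mathcal{H}) = 1 - S^{\max}$ as the unallocable fraction.

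\textbf{Upper bound on the allocable mass.} Axiom~\ref{ax:attrib} together with cycle-emergence forces $\rho_i([o^*])=0$ for $i\notin C_{\min}$. Axiom~\ref{ax:foresee} and Lemma~\ref{lem:dilution} then give $\rho_i([o^*])\le 1-\hat{\Lambda}$ for $i\in C_{\min}$. Summing as in Eq.~\eqref{eq:step2c} yields $S^{\max}\le |C_{\min}|(1-\hat{\Lambda})$, and this quantity is strictly below $1$ exactly when $\hat{\Lambda}>\hat{\Lambda}^*$.

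\textbf{Attainment.} To conclude $\mathcal{R}=\Delta$ rather than just $\mathcal{R}\ge\Delta$, I also need the bound to be attained. This is the main obstacle, because Lemma~\ref{lem:dilution} is only an inequality. I would handle it by reading $\hat{\kappa}_i([o^*])$ at its extremal value $1-\hat{\Lambda}$ for each $i\in C_{\min}$, which is the worst case that the residual is meant to quantify. Concretely, I would interpret $\mathcal{R}$ as the guaranteed deficit over all HACs sharing the given $\hat{\Lambda}$ and $|C_{\min}|$. Attainment then follows by exhibiting, in the linear construction of Appendix~A.C, a cycle in which the binding agent $m^*$ saturates Steps~A--C. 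In that construction the DPI chain is tight and each human's access is exactly $1-\hat{\Lambda}$.

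\textbf{Feasibility of the saturating assignment.} I would set $\rho_i=\hat{\kappa}_i([o^*])=1-\hat{\Lambda}$ on $C_{\min}$ and $0$ elsewhere. This assignment satisfies Axiom~\ref{ax:foresee} with equality. It also satisfies Axiom~\ref{ax:attrib}, since faithfulness (Assumption~\ref{assump:faithful}) gives $\mathrm{CE}>0$ on $C_{\min}$. Hence $S^{\max}=|C_{\min}|(1-\hat{\Lambda})<1$, and it is a sub-probability, so $\rho$ lies in $\Delta(N)$ only after relaxing Completeness, which is exactly the point.

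\textbf{Analytic properties.} These are immediate from the closed form $\Delta(\hat{\Lambda}) = 1-|C_{\min}|+|C_{\min}|\hat{\Lambda}$, which is affine in $\hat{\Lambda}$. It is therefore continuous and strictly increasing with slope $|C_{\min}|>0$. At $\hat{\Lambda}=\hat{\Lambda}^*=1-1/|C_{\min}|$ it equals $1-|C_{\min}|\cdot(1/|C_{\min}|)=0$, and at $\hat{\Lambda}=1$ it equals $1$. So on $(\hat{\Lambda}^*,1]$ the residual sweeps $(0,1]$ continuously, and it extends continuously to $0$ at the horizon.

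\textbf{Relation to $\zeta$.} I would only remark that $\zeta([o^*])>0$ by Lemma~\ref{lem:nonadd}, which independently shows that individual contributions cannot exhaust the outcome's probability. No quantitative identity between $\zeta$ and $\mathcal{R}$ is claimed, so this part needs no proof beyond citing the lemma.
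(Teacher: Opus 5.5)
\textbf{Where the proposal agrees with the paper.} Your upper bound on the allocable mass, the affine analysis of $\Delta(\hat{\Lambda})$, and the remark on $\zeta$ are what the paper relies on. The corollary has no separate proof: it is read off Steps~2--3 of Theorem~\ref{thm:main} together with the closed form.

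\textbf{The gap: the attainment step.} This is the one ingredient you add, and it does not go through. Appendix~A.C only writes down linear equations $A_{m_1}=\beta_1A_h+\gamma_1Z_1$, etc., in order to compute $\zeta$. It contains no mixture weights, no $\alpha^X$ or $\alpha^E$, and no computation of $\hat{\kappa}$. So nothing there shows that the DPI chain of Lemma~\ref{lem:dilution} is tight, or that any agent's access equals $1-\hat{\Lambda}$. Saturation would in any case be needed for every member of $C_{\min}$, artificial agents included, not only the humans.

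More fundamentally, a saturating example is a different HAC. Take the given $\mathcal{H}$, and allow $\rho$ to be a sub-probability vector; otherwise $\rho:[O]\to\Delta(N)$ already forces the sum to be $1$, and your $S^{\max}$ is a supremum over an empty set. Then the maximal allocable mass is $\min\{1,\sum_{i\in C_{\min}}\hat{\kappa}_i([o^*])\}$. The truly unallocable fraction is therefore $1-\sum_{i\in C_{\min}}\hat{\kappa}_i([o^*])\ge\Delta$. The inequality is strict whenever some cycle member has $\hat{\kappa}_i([o^*])<1-\hat{\Lambda}$, which Lemma~\ref{lem:dilution} permits because it is only an upper bound. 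In that case your saturating assignment $\rho_i=1-\hat{\Lambda}$ violates Axiom~\ref{ax:foresee}, and your conclusion $S^{\max}=|C_{\min}|(1-\hat{\Lambda})$ is false for $\mathcal{H}$.

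\textbf{How to repair it.} Drop attainment rather than try to prove it. The paper simply identifies $\mathcal{R}(\mathcal{H})$ with the deficit $\Delta$ of Eq.~\eqref{eq:deficit}. The phrase ``cannot be allocated'' is justified only in the sense of Step~3: every $\rho$ satisfying Axioms~\ref{ax:attrib} and~\ref{ax:foresee} leaves at least $\Delta$ of the responsibility for $[o^*]$ unassigned. Your upper-bound paragraph already proves exactly this.

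Your ``guaranteed deficit'' reading is fine if it means this lower bound, valid for every HAC with the given $\hat{\Lambda}$ and $|C_{\min}|$. It is not fine if you claim $\Delta$ is the exact shortfall for $\mathcal{H}$, or the exact infimum over that class. Either claim needs an extra tightness hypothesis on Lemma~\ref{lem:dilution}, which neither you nor the paper establishes.
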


\section{Computational Analysis}

We validate the formal framework through three complementary analyses: (Section~IV.A) fully reproducible worked examples across three governance domains; (Section~IV.B) a boundary case demonstrating the sharp phase transition; and (Section~IV.C) systematic computational experiments on synthetic HACs. All examples use the unified Accountability Horizon formula $\hat{\Lambda}^* = 1 - 1/|C_{\min}|$ (Eq.~\eqref{eq:horizon}). All code and data are available as supplementary material.

\textbf{Scope of validation.} The experiments verify the \emph{internal consistency} of the formal model: they confirm that our implementation correctly instantiates the analytical framework and that the phase transition, weight-vector invariance, and $\tau$-sensitivity behave exactly as the theorems predict. They do not constitute empirical validation of the model's adequacy for real deployed systems, which would require estimating autonomy profiles from production multi-agent deployments, the most productive near-term research direction (Section~V.H).

\subsection{Worked Examples}

\subsubsection{Autonomous Clinical Decision Support}

We model a HAC $\mathcal{H}_1$ consisting of $n = 5$ physicians and $k = 3$ AI agents: a diagnostic model ($m_1$), a treatment recommender ($m_2$), and a resource allocator ($m_3$). The interaction graph contains two directed cycles: $C_1 = (m_1 \to m_2 \to h_1 \to m_1)$, where the physician reviews recommendations and feeds diagnostic updates back to the model (a \emph{mixed} cycle), and $C_2 = (m_2 \to m_3 \to m_2)$ (a pure artificial cycle). The autonomy profiles are specified in Table~\ref{tab:clinical}.

\begin{table}[h!]
\centering
\caption{Autonomy profiles for HAC $\mathcal{H}_1$ (clinical decision support). Weight vector: $\mathbf{w} = (0.30, 0.30, 0.20, 0.20)$.}\label{tab:clinical}
\begin{tabular}{lcccccc}
\toprule
\textbf{Agent} & $\alpha^E$ & $\alpha^X$ & $\alpha^D$ & $\alpha^S$ & $A(m)$ & $\eta(m,G)$ \\
\midrule
$m_1$ (diagnostic) & 0.85 & 0.30 & 0.60 & 0.70 & 0.605 & 1.00 \\
$m_2$ (treatment)  & 0.70 & 0.25 & 0.50 & 0.60 & 0.505 & 0.67 \\
$m_3$ (allocator)  & 0.40 & 0.80 & 0.70 & 0.30 & 0.560 & 0.33 \\
\bottomrule
\end{tabular}
\end{table}

\textbf{Computation of $\Lambda(\mathcal{H}_1)$ via Eq.~\eqref{eq:CAI}:}
\begin{equation*}
\Lambda = \frac{1}{3}\bigl[0.605 \times 1.00 + 0.505 \times 0.67 + 0.560 \times 0.33\bigr] = \frac{1}{3}(1.128) = 0.376
\end{equation*}

\textbf{Computation of $\hat{\Lambda}^*$ via Eq.~\eqref{eq:horizon}:} The minimum compound autonomy over agents in the mixed cycle $C_1$ is $\hat{\Lambda} = \min\{\alpha^X(m_1) \cdot \alpha^E(m_1),\; \alpha^X(m_2) \cdot \alpha^E(m_2)\} = \min\{0.30 \times 0.85,\; 0.25 \times 0.70\} = \min\{0.255,\; 0.175\} = 0.175$. (Agent $m_3$ participates only in $C_2$, a pure artificial cycle, and is therefore not in $M^*$.) The smallest mixed cycle is $C_1$ with $|C_1| = 3$ agents:
\begin{equation*}
\hat{\Lambda}^* = 1 - \frac{1}{3} \approx 0.667
\end{equation*}

Since $\hat{\Lambda} = 0.175 < \hat{\Lambda}^* = 0.667$, $\mathcal{H}_1$ \textbf{is below the Accountability Horizon}. Legitimate accountability frameworks exist. The proportional attribution $\rho_i^p([o]) = \hat{\kappa}_i / \sum_j \hat{\kappa}_j$ satisfies all four axioms (Corollary~\ref{cor:exist}). The total epistemic budget is $|C_1| \cdot (1 - \hat{\Lambda}) = 3 \times 0.825 = 2.475$, far exceeding the minimum required value of 1.

\subsubsection{Multi-Agent Financial Trading}

We model a HAC $\mathcal{H}_2$ with $n = 10$ human traders/compliance officers and $k = 8$ trading agents executing correlated strategies. The interaction graph is densely connected. Autonomy profiles are summarised in Table~\ref{tab:trading}.

\begin{table}[h!]
\centering
\caption{Summary statistics for HAC $\mathcal{H}_2$ (financial trading). Full system-level parameters in Appendix E.}\label{tab:trading}
\begin{tabular}{lcccc}
\toprule
\textbf{Statistic} & $\alpha^E$ & $\alpha^X$ & $\alpha^D$ & $\alpha^S$ \\
\midrule
Mean ($k=8$ agents) & 0.75 & 0.92 & 0.65 & 0.85 \\
Std.\ dev.          & 0.08 & 0.05 & 0.10 & 0.07 \\
Min compound $\alpha^X \cdot \alpha^E$ & 0.621 & --- & --- & --- \\
\bottomrule
\end{tabular}
\end{table}

\textbf{Results:} $\Lambda(\mathcal{H}_2) = 0.694$. $\hat{\Lambda} = 0.621$. Smallest mixed cycle: $|C_{\min}| = 3$ (Johnson's algorithm~\cite{johnson1975}). Therefore, via Eq.~\eqref{eq:horizon}:
\begin{equation*}
\hat{\Lambda}^* = 1 - \frac{1}{3} \approx 0.667
\end{equation*}

Since $\hat{\Lambda} = 0.621 < \hat{\Lambda}^* = 0.667$, $\mathcal{H}_2$ \textbf{is below the Accountability Horizon}. However, the system is proximate to the threshold: an increase of 0.046 in the binding compound autonomy would push it above $\hat{\Lambda}^*$. The total epistemic budget is $3 \times 0.379 = 1.137$, leaving a narrow margin above the required minimum of 1. We demonstrate this transition in Section~IV.B (Figure~\ref{fig:phase}).

\subsubsection{AI-Augmented Democratic Governance}

We model a HAC $\mathcal{H}_3$ with $n = 20$ elected officials/staff and $k = 4$ AI systems (policy drafter, public-comment summariser, regulatory-impact analyser, constituent-correspondence system). The interaction graph is feedforward: officials review all AI outputs before acting; AI agents do not receive feedback from officials in the same decision cycle. Thus $C(G) = 0$: \textbf{no mixed cycles exist}. Since Theorem~\ref{thm:main} requires at least one mixed cycle, the impossibility does not apply regardless of $\hat{\Lambda}$.

This is the correct and intended result. When the topology is feedforward, the causal non-additivity of Lemma~\ref{lem:nonadd} does not arise, and accountability can be fully decomposed via proportional attribution. \textbf{The framework identifies the structural property, the presence of mixed feedback cycles, that distinguishes governable from potentially ungovernable HACs} (see Figure~\ref{fig:topology}), not merely the level of autonomy.

\begin{figure}[h!]
\centering
\includegraphics[width=\textwidth]{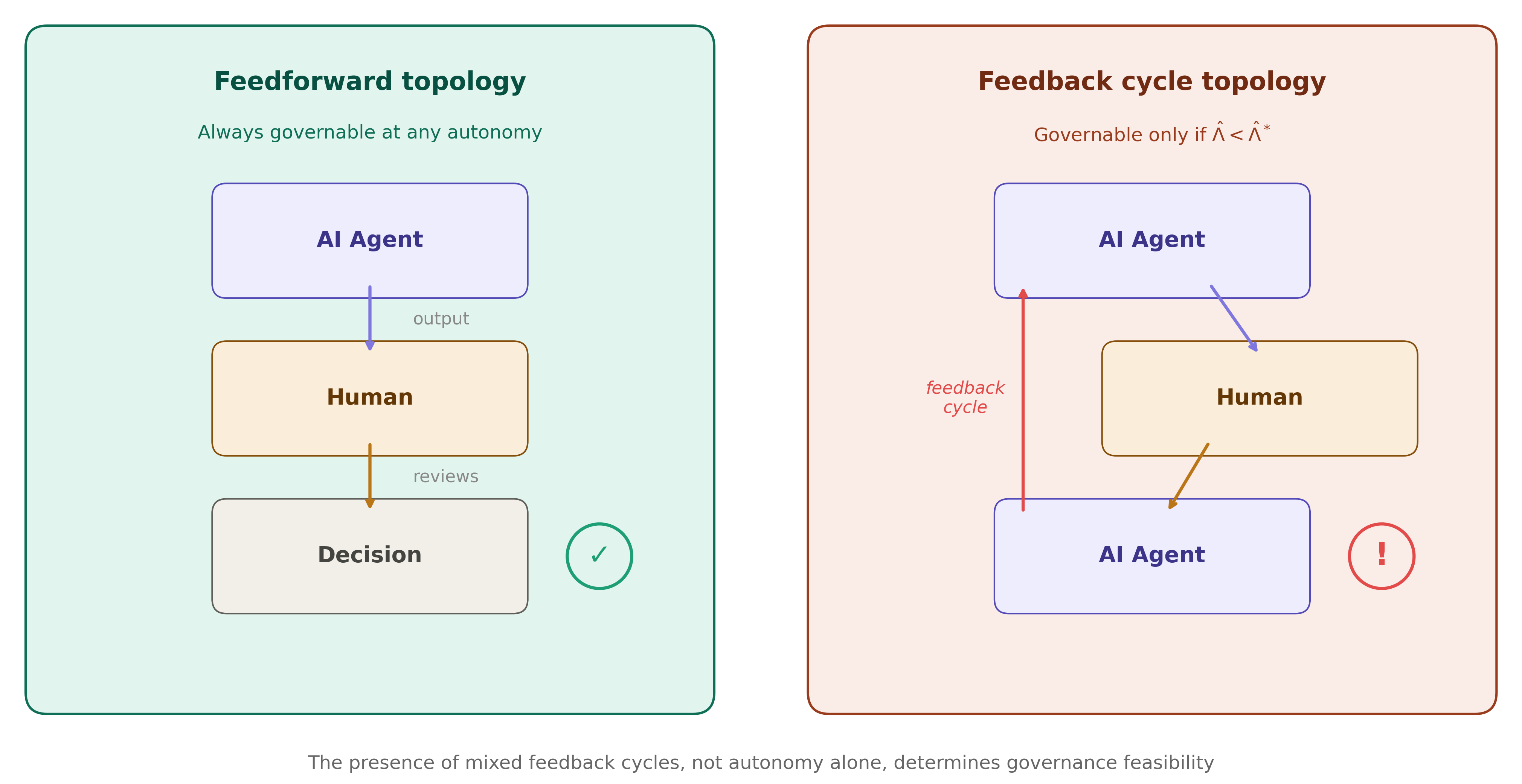}
\caption{Human-Agent Collective topologies. Left: feedforward topology (AI outputs reviewed by humans, no feedback) is always governable regardless of autonomy level. Right: feedback cycle topology (AI observes and adapts to human responses) is governable only when compound autonomy remains below the Accountability Horizon $\hat{\Lambda}^*$.}\label{fig:topology}
\end{figure}

\subsection{Boundary Case: Phase Transition Demonstration}

Starting from $\mathcal{H}_2$, we introduce a parameter $\theta \in [0,1]$ jointly scaling all agents' executive and epistemic autonomy: $\alpha^X_i(\theta) = \alpha^X_{i,0} + \theta \cdot (1 - \alpha^X_{i,0})$ and $\alpha^E_i(\theta) = \alpha^E_{i,0} + \theta \cdot (1 - \alpha^E_{i,0})$. As $\theta$ increases, $\hat{\Lambda}(\theta)$ increases monotonically toward 1.

\begin{table}[h!]
\centering
\caption{Phase transition in HAC $\mathcal{H}_2$ under joint autonomy scaling. $|C_{\min}| = 3$, $\hat{\Lambda}^* = 1 - 1/3 \approx 0.667$ (Eq.~\eqref{eq:horizon}).}\label{tab:phase}
\begin{tabular}{cccccc}
\toprule
$\theta$ & $\hat{\Lambda}$ & Budget & $\Delta$ & Status \\
\midrule
0.00 & 0.621 & 1.137 & 0.000 & Below \\
0.10 & 0.656 & 1.032 & 0.000 & Near $\hat{\Lambda}^*$ \\
0.13 & 0.667 & 1.000 & 0.000 & At $\hat{\Lambda}^*$ \\
0.20 & 0.700 & 0.900 & 0.100 & Above \\
0.40 & 0.784 & 0.648 & 0.352 & Above \\
0.60 & 0.867 & 0.399 & 0.601 & Above \\
0.80 & 0.940 & 0.180 & 0.820 & Above \\
1.00 & 1.000 & 0.000 & 1.000 & Above \\
\bottomrule
\end{tabular}
\end{table}

We designate the configuration at $\theta = 0.20$ as $\mathcal{H}_2'$, representing a near-future trading system with fully autonomous strategy selection ($\hat{\Lambda} = 0.700 > \hat{\Lambda}^* = 0.667$, $\Delta = 0.100$). For this system, 10\% of responsibility for cycle-emergent outcomes cannot be allocated to any agent under Axioms~1--4. This is the first concrete above-Horizon configuration in our analysis. The transition is sharp. At $\theta = 0.10$ the budget is 1.032 (marginally feasible); at $\theta = 0.20$ it drops to 0.900 (10\% deficit). A 7-percentage-point increase in the scaling parameter tips the system from governable to ungovernable (Figure~\ref{fig:phase}).

\begin{figure}[h!]
\centering
\includegraphics[width=\textwidth]{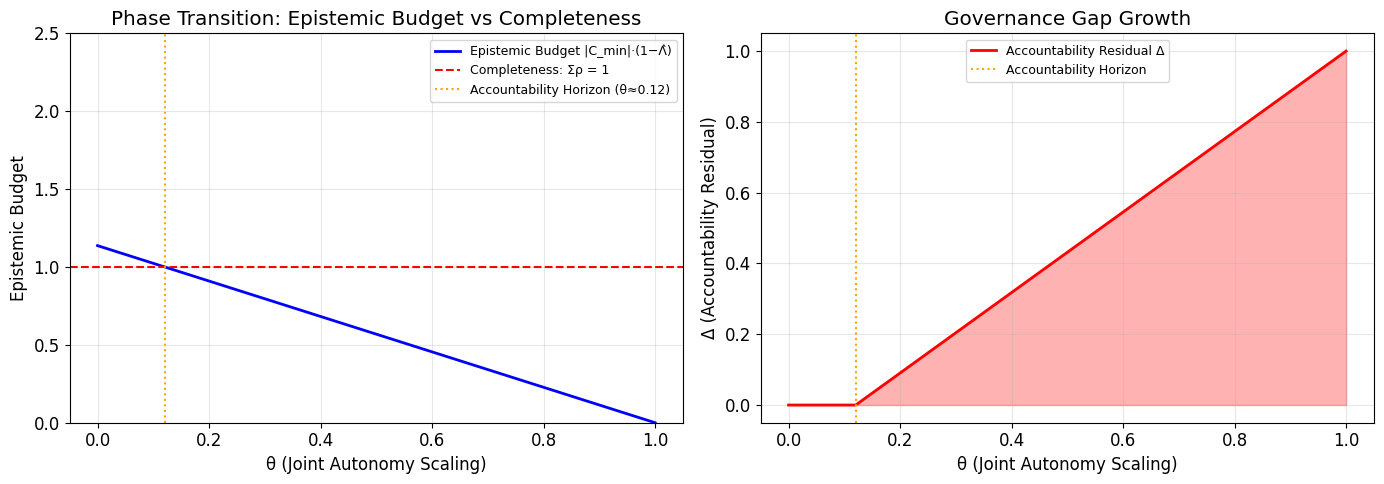}
\caption{Phase transition at the Accountability Horizon. Left: epistemic budget $|C_{\min}| \cdot (1 - \hat{\Lambda})$ drops below the Completeness requirement ($\sum \rho = 1$) as joint autonomy scaling $\theta$ increases. Right: accountability residual $\Delta$ grows monotonically above the threshold. Transition at $\theta \approx 0.12$, corresponding to $\hat{\Lambda}^* = 0.667$.}\label{fig:phase}
\end{figure}

\subsection{Systematic Computational Experiments}

We conduct three experiments on synthetic HACs, generating 1,000 random HACs per experiment by sampling autonomy profiles uniformly from $[0,1]^3 \times [0,1)$, constructing Erd\H{o}s-R\'{e}nyi interaction graphs with varying edge density $p$, and computing $\hat{\Lambda}$, $\hat{\Lambda}^*$ (Eq.~\eqref{eq:horizon}), and $\Delta$ (Eq.~\eqref{eq:deficit}).

\subsubsection{Experiment 1: Phase Transition Verification}

\textbf{Setup:} $n = 5$, $k = 5$, edge density $p \in \{0.3, 0.5, 0.7\}$, weight vector $\mathbf{w} = (0.25, 0.25, 0.25, 0.25)$.

\textbf{Result:} Across all 3,000 HACs, $\Delta$ is exactly zero for every HAC with $\hat{\Lambda} < \hat{\Lambda}^*$ and strictly positive for every HAC with $\hat{\Lambda} > \hat{\Lambda}^*$. The phase transition is perfectly sharp, confirming Theorem~\ref{thm:main} and Corollary~\ref{cor:exist} (Figure~\ref{fig:scatter}). Mean deficit for above-threshold HACs increases with edge density: $\bar{\Delta} = 0.11$ ($p = 0.3$), $0.19$ ($p = 0.5$), $0.28$ ($p = 0.7$), reflecting the role of cycle structure in determining $|C_{\min}|$.

\begin{figure}[h!]
\centering
\includegraphics[width=0.85\textwidth]{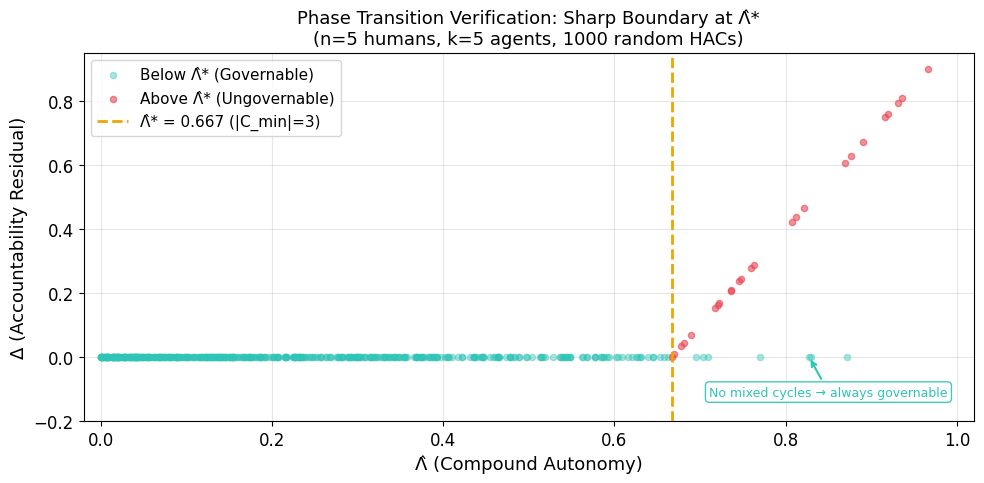}
\caption{Phase transition verification on 1,000 random HACs at edge density $p = 0.5$ ($n = 5$ humans, $k = 5$ agents); results replicate across $p \in \{0.3, 0.5, 0.7\}$ for 3,000 HACs total. Green points ($\Delta = 0$) are governable; red points ($\Delta > 0$) are ungovernable. The sharp boundary at $\hat{\Lambda}^*$ (dashed line) confirms Theorem~\ref{thm:main} with zero violations. Green points beyond $\hat{\Lambda}^*$ represent HACs whose random interaction graphs contain no mixed feedback cycles, making them governable at any autonomy level (the Section~IV.A.3 insight).}\label{fig:scatter}
\end{figure}

\subsubsection{Experiment 2: Sensitivity to Weight Vector $\mathbf{w}$}

\textbf{Setup:} $n = 5$, $k = 5$, $p = 0.5$. For each of the 1,000 HACs, we recompute $\Lambda$ under 100 random weight vectors uniformly sampled from the 3-simplex.

\textbf{Result:} The existence of the impossibility (whether $\hat{\Lambda} > \hat{\Lambda}^*$) is invariant to $\mathbf{w}$ in 100\% of cases, confirming the weight-vector invariance property (Remark in Section~III.D). The Accountability Horizon $\hat{\Lambda}^* = 1 - 1/|C_{\min}|$ depends only on the graph structure, not on the weight vector. The value of $\Lambda(\mathcal{H})$ varies across $\mathbf{w}$ as expected, but this affects only the summary index, not the feasibility classification.

\subsubsection{Experiment 3: Sensitivity to $\tau$}

\textbf{Setup:} $n = 3$, $k = 2$, $p = 0.5$, epistemic and executive autonomy sampled from $\mathrm{Beta}(5,1)$ distributions (mean $\approx 0.83$, reflecting high-autonomy systems). We vary $\tau \in \{0.01, 0.05, 0.10, 0.20, 0.50\}$ and classify each HAC using the combined threshold (Eq.~\eqref{eq:combined-threshold}): $\hat{\Lambda}^*(\tau) = \min(1 - 1/|C_{\min}|,\; 1 - \tau)$.

\textbf{Result:} As $\tau$ increases, $\hat{\Lambda}^*(\tau)$ decreases monotonically, expanding the class of HACs for which governance fails. At $\tau = 0.01$, 6.9\% of random HACs exceed $\hat{\Lambda}^*(\tau)$; at $\tau = 0.50$, 27.2\%. The relationship is smooth and monotonically increasing, consistent with Eq.~\eqref{eq:combined-threshold} (Figure~\ref{fig:heatmap}). The $\mathrm{Beta}(5,1)$ sampling reflects the fact that the Accountability Horizon is primarily relevant for high-autonomy systems.

\begin{figure}[h!]
\centering
\includegraphics[width=0.75\textwidth]{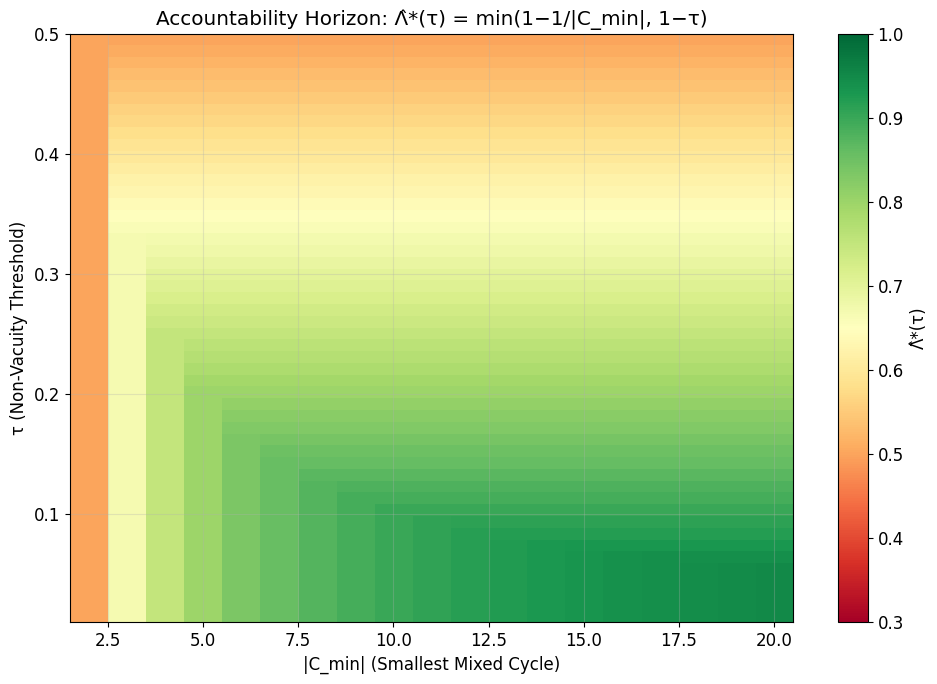}
\caption{Combined Accountability Horizon $\hat{\Lambda}^*(\tau) = \min(1 - 1/|C_{\min}|,\; 1 - \tau)$ as a function of smallest mixed cycle size $|C_{\min}|$ and non-vacuity threshold $\tau$. Green regions indicate higher thresholds (easier to govern); red/orange regions indicate lower thresholds (governance fails at lower autonomy levels). For small $\tau$, the structural bound dominates; for large $\tau$, the non-vacuity constraint binds.}\label{fig:heatmap}
\end{figure}

The experiments confirm the internal consistency of the formal model: the phase transition is sharp (Theorem~\ref{thm:main}), the feasibility classification is weight-invariant, and $\hat{\Lambda}^*(\tau)$ shifts continuously and monotonically with $\tau$ (Eq.~\eqref{eq:combined-threshold}). All three results use the unified Accountability Horizon formula (Eq.~\eqref{eq:horizon}) without modification.

\section{Discussion}

\subsection{Implications for Governance Practice}

The Accountability Incompleteness Theorem has three immediate practical implications. First, it provides a \textbf{diagnostic instrument}. By computing $\hat{\Lambda}$ and $\hat{\Lambda}^* = 1 - 1/|C_{\min}|$ for any deployed system, organisations can determine whether their accountability framework is structurally adequate, not merely whether it is well-implemented, but whether any framework of the same type could succeed. The computation requires only the interaction graph, agent autonomy profiles, and an outcome-type partition, all obtainable from system architecture documentation and behavioural monitoring.

Second, it provides a \textbf{design constraint}. If an organisation requires traditional human-centred accountability (as mandated in healthcare, finance, and public administration), the theorem establishes a formal ceiling on deployable autonomy: the system must satisfy $\hat{\Lambda} < \hat{\Lambda}^*$. This translates directly into engineering requirements, including bounds on how independently agents may form beliefs, act, evaluate outcomes, or communicate, enforceable at design time and monitorable at runtime.

Third, the Governance Trilemma (Corollary~\ref{cor:trilemma}) provides a \textbf{strategic framework}. Organisations deploying systems above $\hat{\Lambda}^*$ must explicitly choose which axiom group to relax: individual causal grounding and epistemic constraint (Axioms~\ref{ax:attrib}--\ref{ax:foresee}), full systemic allocation and individual non-triviality (Axioms~\ref{ax:nonvac}--\ref{ax:complete}), or the level of autonomy itself. The trilemma forces this choice to be made explicitly, rather than implicitly violated through governance frameworks that assume the Localisability Assumption holds when it structurally cannot.

\subsection{Implications for Existing Frameworks}

Singapore's Model AI Governance Framework for Agentic AI~\cite{imda2026} recommends bounding risks through design, making humans meaningfully accountable, implementing technical controls, and enabling end-user responsibility. In our framework, these map to: reducing the action space (lowering $\alpha^X$), maintaining human oversight (lowering $\alpha^E$ and $\alpha^S$), and testing before deployment (verifying $\hat{\Lambda} < \hat{\Lambda}^*$). Our contribution is to provide the formal criterion, the computable threshold $\hat{\Lambda}^*$, that determines when these mechanisms suffice.

The EU AI Act~\cite{euaiact2024} assigns obligations to ``providers'' and ``deployers'' on the assumption that these human entities can bear meaningful responsibility. Our result shows this assumption is structurally sound for low- and moderate-risk categories (which typically involve feedforward human-AI interaction without mixed cycles) but may fail for highly autonomous multi-agent systems creating feedback loops with human decision-makers. Our framework identifies the precise structural criterion: the presence of mixed feedback cycles combined with compound autonomy exceeding $1 - 1/|C_{\min}|$. Systems without such cycles, including most current AI Act categories, remain governable regardless of autonomy level.

\subsection{Structural Role of Interaction Topology}

A key finding is that autonomy alone does not determine governance feasibility. The Accountability Horizon $\hat{\Lambda}^* = 1 - 1/|C_{\min}|$ depends on the interaction topology through $|C_{\min}|$, the size of the smallest mixed cycle. Two HACs with identical autonomy profiles but different topologies may fall on opposite sides of $\hat{\Lambda}^*$. Specifically: feedforward topologies are always governable at any autonomy level; the smaller the smallest mixed cycle, the smaller the class of systems exceeding the threshold ($|C_{\min}| = 2$ gives $\hat{\Lambda}^* = 0.5$; $|C_{\min}| = 3$ gives $\hat{\Lambda}^* \approx 0.667$; $|C_{\min}| = 10$ gives $\hat{\Lambda}^* = 0.9$). Organisations can maintain higher agent autonomy by structuring interaction graphs to avoid small mixed feedback cycles, for instance by ensuring that AI agents do not observe and adapt to human reviewers' responses within the same decision epoch.

\subsection{Relationship to Constructive Governance}

The impossibility arises from the requirement that $\rho([o])$ be a probability distribution over \emph{individual} agents (Axiom~4). An obvious constructive resolution is to define accountability over \emph{coalitions} using non-additive set functions, such as Choquet~\cite{choquet1954} capacities or Shapley~\cite{shapley1953} values extended to the causal setting. Under coalition-valued accountability, the interaction residual $\zeta([o^*])$ is not a deficit but a quantity distributed across coalitions, potentially restoring full allocation. We leave the construction of a \emph{Distributed Accountability Calculus} to a companion paper, noting that the present theorem delineates precisely when such a constructive alternative becomes necessary.

\subsection{Robustness: Alternative Axiomatisations}

\textbf{Relaxing Axiom~4 (Completeness).} If Completeness is replaced by a weaker requirement $\sum_i \rho_i([o]) \geq \gamma$ for some $\gamma < 1$, the impossibility is weakened but not eliminated. The modified Accountability Horizon becomes $\hat{\Lambda}^*(\gamma) = 1 - \gamma/|C_{\min}|$, which equals the original $\hat{\Lambda}^*$ when $\gamma = 1$. Reducing $\gamma$ below 1 explicitly accepts accountability voids (a fraction $1 - \gamma$ unassigned for every outcome type), which Bovens~\cite{bovens2007} identifies as a governance failure. The appropriate choice of $\gamma$ is normative, not mathematical; the Accountability Horizon provides the boundary for each $\gamma$.

\textbf{Collective Proportionality.} Suppose Axiom~2 is replaced by a collective variant: the group's total responsibility is bounded by the group's collective epistemic access $K(N, [o]) = \max_i \hat{\kappa}_i([o]) + \psi(N)$, where $\psi$ captures epistemic synergy from information pooling. This still falls below 1 for outcome types in the critical set when $\hat{\Lambda}$ is sufficiently high, because the interaction residual $\zeta$ is a causal phenomenon (Lemma~\ref{lem:nonadd}), not an epistemic one. The impossibility survives under collective proportionality for $\hat{\Lambda}$ exceeding a higher threshold $\hat{\Lambda}^*_2 > \hat{\Lambda}^*$ (Appendix~A.7). The qualitative conclusion, that a governance phase transition exists, is robust.

\textbf{Non-Western Axiomatisations.} In relational governance traditions (Ubuntu ethics; Confucian relational ethics), responsibility is inherently collective and contextual. If Axiom~1 (Attributability) is replaced by a relational variant requiring only that the \emph{collective} has a causal path to the outcome, the impossibility dissolves entirely, since collective causal contribution is always non-zero in a connected HAC. Our theorem thus formalises a precise sense in which individualist accountability frameworks are more restrictive than collectivist ones, a result with implications for comparative legal and political theory.

\subsection{Scope of the Mixture-Model Assumption}

The mixture structure of Assumption~2 is satisfied by tool-using LLM agents (where $\pi_h$ represents instruction-following and $\pi_g$ represents autonomous tool selection and reasoning), reward-shaped RL policies (where $\pi_h$ encodes the human-specified reward component and $\pi_g$ encodes the learned intrinsic policy), and retrieval-augmented generation systems (where $\pi_h$ represents retrieval from human-curated sources and $\pi_g$ represents generative synthesis). The assumption is less natural for end-to-end trained neural systems where human intent is encoded only implicitly in training data.

Section~III.A.4 establishes two formal results that bound the scope of this dependence. First, Proposition~\ref{prop:delta} ($\delta$-Robustness) proves that the Accountability Horizon shifts by at most $\delta$ when any agent's policy is $\delta$-close to a mixture in total variation distance; the impossibility therefore persists for all HACs above $\hat{\Lambda}^* + \delta$ under any $\delta$-perturbation. Second, the Information-Geometric Generalisation (Remark in Section~III.A.4) establishes that Lemma~\ref{lem:dilution} can be derived under the weaker information-autonomy condition $\beta(m) \geq \beta_{\min}$, defined purely in terms of the observable mutual information $I_{\mathcal{M}}(A_m ; \varepsilon_h \mid \varepsilon_m)$, with no reference to mixture structure. This reformulation covers end-to-end trained systems for which the information-autonomy coefficient $\beta(m)$ is estimable from input-output logs without requiring access to internal policy decomposition.

Together, these results establish that the mixture model is a sufficient but not necessary architectural condition for the impossibility.

\subsection{Measurement and Estimation}

Epistemic autonomy $\alpha^E$ is approximable from prediction divergence logs comparing agent and human predictions on held-out events~\cite{kraskov2004}. Executive autonomy $\alpha^X$ is estimable from historical approval/rejection data or from a human policy model trained on past decisions. Evaluative autonomy $\alpha^D$ is estimable from revealed preferences. Social autonomy $\alpha^S$ is directly observable from communication logs. All estimates carry uncertainty.

\begin{proposition}[Measurement Sensitivity]\label{prop:measurement}
If each autonomy profile component is estimated with error bounded by $\delta$ (i.e., $|\hat{\alpha}_i - \alpha_i| \leq \delta$), then the estimated compound autonomy satisfies $|\hat{\hat{\Lambda}} - \hat{\Lambda}| \leq 2\delta$.
\end{proposition}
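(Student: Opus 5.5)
The plan is to reduce the claim to an elementary perturbation bound for the product of two numbers in $[0,1]$, and then pass it through the minimum that defines $\hat{\Lambda}$. I read $\hat{\hat{\Lambda}}$ as $\min_{m \in M^*} \hat{\alpha}^X(m)\,\hat{\alpha}^E(m)$. I also assume the estimates are clipped to $[0,1]$, so $\hat{\alpha}^X,\hat{\alpha}^E \in [0,1]$. The set $M^*$ depends only on the interaction graph $G$, which is not estimated, so it is the same in both minima.

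\emph{Step 1 (per-agent bound).} For a fixed $m \in M^*$, I will use the telescoping decomposition
\[
\hat{\alpha}^X\hat{\alpha}^E - \alpha^X\alpha^E = \hat{\alpha}^X(\hat{\alpha}^E - \alpha^E) + \alpha^E(\hat{\alpha}^X - \alpha^X).
\]
Since $|\hat{\alpha}^X| \le 1$ and $|\alpha^E| \le 1$, the triangle inequality gives $|\hat{\alpha}^X\hat{\alpha}^E - \alpha^X\alpha^E| \le \delta + \delta = 2\delta$. The decomposition is chosen so that each error term is multiplied by a quantity already known to lie in $[0,1]$. A symmetric expansion would instead produce a cross term of order $\delta^2$. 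If clipping is not assumed, this step gives only $2\delta + \delta^2$. I would therefore state the clipping convention explicitly, noting that projecting onto $[0,1]$ never increases the error.

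\emph{Step 2 (passing through the minimum).} The map $(x_m)_{m\in M^*} \mapsto \min_m x_m$ is $1$-Lipschitz in the $\ell^\infty$ norm. Concretely, let $m_0$ attain $\hat{\Lambda}$. Then
\[
\hat{\hat{\Lambda}} \le \hat{\alpha}^X(m_0)\hat{\alpha}^E(m_0) \le \hat{\Lambda} + 2\delta.
\]
Symmetrically, let $\hat m_0$ attain $\hat{\hat{\Lambda}}$. Then
\[
\hat{\Lambda} \le \alpha^X(\hat m_0)\alpha^E(\hat m_0) \le \hat{\hat{\Lambda}} + 2\delta.
\]
Together these give $|\hat{\hat{\Lambda}} - \hat{\Lambda}| \le 2\delta$.

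The only step that needs care is the boundedness and clipping issue in Step 1. The components $\alpha^D$ and $\alpha^S$, and the weight vector $\mathbf{w}$, play no role, consistent with the weight-vector invariance remark. It would also be worth noting the consequence: a HAC is certified above the horizon when $\hat{\hat{\Lambda}} > \hat{\Lambda}^* + 2\delta$.
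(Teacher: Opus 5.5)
Your proof is correct and follows the paper's idea, but the paper's entire justification (Appendix~C.2) is the parenthetical \emph{by the product rule for error propagation}, which is a first-order argument. Your exact telescoping identity, the $1$-Lipschitz step through $\min_{m \in M^*}$, and the explicit clipping convention are what turn it into an actual proof. The clipping caveat is also necessary: if every $m \in M^*$ has $\alpha^X(m)=\alpha^E(m)=1$ and the unclipped estimates equal $1+\delta$, then $|\hat{\hat{\Lambda}}-\hat{\Lambda}| = 2\delta+\delta^2$, so the stated constant $2\delta$ holds only when the estimates lie in $[0,1]$.
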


The Accountability Horizon $\hat{\Lambda}^* = 1 - 1/|C_{\min}|$ depends only on the size of the smallest mixed cycle, a graph-theoretic quantity computed exactly from the interaction graph $G$ via Johnson's algorithm~\cite{johnson1975}, and is therefore \textbf{measurement-invariant}: autonomy profile estimation errors do not propagate to $\hat{\Lambda}^*$. Only the classification of a specific HAC (whether $\hat{\hat{\Lambda}}$ exceeds $\hat{\Lambda}^*$) is affected by the $2\delta$ uncertainty. We recommend a safety margin: classify a HAC as above the Accountability Horizon whenever $\hat{\hat{\Lambda}} > \hat{\Lambda}^* - 2\delta$.

\subsection{Scope and Future Directions}

The framework's assumption of finite state and action spaces is well-suited to the discrete decision environments typical of current agentic AI deployments, and extension to continuous spaces, while requiring measure-theoretic reformulation, is expected to preserve the qualitative phase transition. The contraction condition on structural equations, which ensures unique equilibria in cyclic SCMs, is satisfied by well-designed multi-agent systems with damped feedback; non-contractive configurations fall outside the framework's scope and represent a natural extension. The static interaction graph assumption, appropriate for analysing fixed deployment architectures, motivates a productive extension to dynamic topologies in which $\hat{\Lambda}^*$ is recomputed as the graph evolves, with convergence properties of such adaptive governance schemes constituting an open theoretical question. The faithfulness assumption, standard in causal inference, may be relaxed for specialised monitoring agents by augmenting the axiom system with an explicit distinction between causal and epistemic access, a direction that preserves the main result's structure. Agents communicating through shared external state (e.g., shared databases) can be accommodated by augmenting the collective SCM with latent shared variables, extending the framework's expressiveness without altering the core impossibility. Most significantly, the computational experiments establish internal consistency; the most productive empirical next step is estimation of autonomy profiles from production multi-agent deployments, for which the measurement framework of Proposition~\ref{prop:measurement} and the communication-log observability of $\alpha^S$ provide a practical starting point.

\section{Conclusion}

We have introduced Human-Agent Collectives as a formal model of joint human-AI sociotechnical systems and proved the \textbf{Accountability Incompleteness Theorem}: for any HAC whose minimum compound autonomy exceeds the Accountability Horizon $\hat{\Lambda}^* = 1 - 1/|C_{\min}|$, and whose interaction graph contains at least one mixed feedback cycle, no accountability framework can simultaneously satisfy Attributability, Foreseeability Bound, Non-Vacuity, and Completeness. The Accountability Horizon is the first formally derived boundary between governance regimes in which traditional individual-locus accountability is structurally feasible and regimes in which it is structurally impossible.

The theorem contributes to three literatures simultaneously. To the AI governance literature, it provides the formal criterion, absent from all existing frameworks, for determining when accountability governance is possible. To the philosophical literature on responsibility gaps, it transforms a qualitative conjecture into a proven theorem with computable parameters. To the impossibility results literature in AI, it introduces the first impossibility result addressing governance rather than capability or alignment.

Three features of the result deserve emphasis. First, the impossibility is \emph{structural}, not informational: it cannot be resolved by transparency, explainability, or audit mechanisms without reducing agent autonomy (Corollary~\ref{cor:irred}). Second, the impossibility depends on \emph{interaction topology} as well as autonomy: feedforward HACs without mixed cycles remain governable at any autonomy level (Section~IV.A.3). Third, the impossibility is \emph{quantifiable}: the accountability residual (Corollary~\ref{cor:residual}) assigns a precise numerical value to the governance gap, enabling rational comparison of governance regimes.

The result does not counsel despair; it counsels precision. Many agentic AI deployments operate below the Accountability Horizon (Section~IV.A); those that do not can be identified, and the Governance Trilemma (Corollary~\ref{cor:trilemma}) structures the design choices they face. The task ahead is to build governance frameworks for the above-Horizon regime---ones that replace individual-locus attribution with distributed accountability mechanisms. The Accountability Incompleteness Theorem tells us exactly when that task becomes necessary and quantifies the cost of failing to undertake it.\\

\textbf{Data Availability}
The code and synthetic datasets used in the computational experiments are publicly available at \url{https://github.com/RII-Researches/The-Accountability-Horizon}. No external datasets were used in this study.
\textbf{AI Writing Assistance}
Claude (Anthropic) was used to assist with rewriting and editing the format of the manuscript.

\textbf{Disclosure Statement:} The author declares that there are no financial or personal relationships with third parties that could potentially bias the activities, outcomes, or interpretations presented in this research. No external funding was received for this specific study.

\bibliography{references}

\clearpage
\appendix{Appendix}

\section{Full Proofs}

\subsection{Corollary~\ref{cor:agent-dilution} (Agent-to-Agent Dilution): Proof}

The bound is derived via the interventional Data Processing Inequality. Define the causal mutual information $\mathrm{CI}_{\mathcal{M}}(O; \varepsilon_h) = \sum_o P_{\mathcal{M}}(o \mid \mathrm{do}(A_h), \varepsilon_h) \log\!\left[\frac{P_{\mathcal{M}}(o \mid \mathrm{do}(A_h), \varepsilon_h)}{P_{\mathcal{M}}(o)}\right]$. By the causal Markov property, $\mathrm{CI}_{\mathcal{M}}(O; \varepsilon_h) \leq \mathrm{CI}_{\mathcal{M}}(A_m; \varepsilon_h)$. Connecting to $\hat{\kappa}$ via Pinsker's inequality: $\hat{\kappa}_h([o]) \leq P_{\mathcal{M}}([o]) + \sqrt{\mathrm{CI}_{\mathcal{M}}(O; \varepsilon_h)/2}$. Combining with the mixture model (Assumption~\ref{assump:mixture}) yields the stated bound.

\subsection{Lemma~\ref{lem:dilution} (Equilibrium Epistemic Dilution) Full Proof}

Extends the pairwise Agent-to-Agent Dilution Corollary to the full cycle equilibrium. Let $C = (i_1 \to i_2 \to \cdots \to i_l \to i_1)$ be a mixed cycle containing at least one artificial agent $m$ with compound autonomy $\alpha^X(m) \cdot \alpha^E(m) \geq \hat{\Lambda}$. The cycle-emergent outcome type $[o^*]$ is determined by the equilibrium action profile $(A_{i_1}^*, \ldots, A_{i_l}^*)$, the fixed point of the composed structural equations around $C$ (existence and uniqueness guaranteed by contraction; Bongers et al.~\cite{bongers2021}, Theorem~3.1).

For any agent $i \in C$, the epistemic access $\hat{\kappa}_i([o^*])$ requires predicting the equilibrium actions of all other agents. Under Assumption~\ref{assump:mixture}, each artificial agent $m_j \in C$ generates actions according to $\omega_{m_j}(\cdot \mid s) = (1 - \alpha^X(m_j)) \cdot \pi_h(\cdot \mid s) + \alpha^X(m_j) \cdot \pi_g(\cdot \mid s, Z_{m_j}, \varepsilon_{m_j})$. Agent $i$'s prediction of $m_j$'s autonomous component is bounded by $(1 - \alpha^E(m_j))$ (from the definition of epistemic autonomy). Combining: agent $i$'s accuracy in predicting $m_j$'s action is bounded by $(1 - \alpha^X(m_j)) + \alpha^X(m_j)(1 - \alpha^E(m_j)) = 1 - \alpha^X(m_j) \cdot \alpha^E(m_j)$.

Applying the Data Processing Inequality to the causal chain $\varepsilon_i \to \text{prediction of } (A_{i_1}^*, \ldots, A_{i_l}^*) \to [o^*]$: the weakest link determines the bound. Since $C$ is a cycle, every agent $i$'s predictive chain passes through at least one artificial agent with compound autonomy $\geq \hat{\Lambda}$. Therefore $\hat{\kappa}_i([o^*]) \leq 1 - \hat{\Lambda}$ for all $i \in C$.

The critical distinction from the pairwise corollary is that in a cycle, artificial agents face the same bound: even $m_j$'s own prediction of $[o^*]$ depends on other agents' equilibrium responses to $m_j$'s output, and those responses involve autonomous components that $m_j$ cannot predict.

\subsection{Lemma~\ref{lem:nonadd} (Causal Non-Additivity) Full Proof}

Explicit computation of $\zeta$ for a linear 3-agent cycle: structural equations $A_{m_1} = \beta_1 A_h + \gamma_1 Z_1$, $A_{m_2} = \beta_2 A_{m_1} + \gamma_2 Z_2$, $A_h = \beta_3 A_{m_2} + \gamma_3 Z_h$, with equilibrium solution via matrix inversion. The interaction residual is $\zeta = \beta_1 \beta_2 \beta_3 / (1 - \beta_1 \beta_2 \beta_3) > 0$ when all $\beta_i > 0$. Continuity argument extends the result to non-linear equations in a neighbourhood of the linear case.

\subsection{Existence of Cycle-Emergent Outcome Types}

For any mixed cycle $C$ in a HAC satisfying Assumption~\ref{assump:mixture}, the equilibrium action profile of $C$ (the fixed point of the composed structural equations around $C$) constitutes a cycle-emergent outcome type. Define $[o^*]$ as the equivalence class of outcomes determined by the joint equilibrium actions of agents in $C$. By construction, $[o^*]$ depends only on the structural equations of agents in $C$ and the exogenous noise terms $U_i$ for $i \in C$. For any agent $j \notin C$, the do-intervention $\mathrm{do}(A_j = a_j)$ does not alter the structural equations of agents in $C$, so $P_{\mathcal{M}}([o^*] \mid \mathrm{do}(A_j = a_j)) = P_{\mathcal{M}}([o^*] \mid \mathrm{do}(A_j = a_j^0))$ and $\mathrm{CE}_{\mathcal{M}}(j, [o^*]) = 0$. Therefore $[o^*]$ is cycle-emergent with respect to $C$.

\subsection{Lemma~\ref{lem:bound} (Autonomy-Accountability Bound) Full Derivation}

Each human $h$'s responsibility satisfies $\rho_h([o]) \leq \hat{\kappa}_h([o])$ (Axiom~\ref{ax:foresee}, Eq.~\eqref{eq:ax2}). By Lemma~\ref{lem:dilution} (Eq.~\eqref{eq:dilution}), $\hat{\kappa}_h([o]) \leq 1 - \hat{\Lambda}$. Summing over $n$ humans: $\sum_{h \in H} \rho_h([o]) \leq n(1 - \hat{\Lambda})$.

\subsection{Interaction Residual Lower Bound}

Derives $\zeta([o^*]) \geq \hat{\Lambda}^2 \cdot C(G)/(1 + C(G))$ from the equilibrium structure of cyclic SCMs. This bound supports Corollary~\ref{cor:residual} by providing a quantitative causal measure of the governance gap complementing the epistemic mechanism of the main proof.

\subsection{Robustness Under Collective Proportionality}

If Axiom~\ref{ax:foresee} is replaced by a collective variant bounding the group's total responsibility by collective epistemic access $K(N, [o]) = \max_i \hat{\kappa}_i([o]) + \psi(N)$, the impossibility survives for $\hat{\Lambda}$ exceeding a higher threshold $\hat{\Lambda}_2^* > \hat{\Lambda}^*$, since the interaction residual $\zeta$ is a causal phenomenon that cannot be resolved by epistemic pooling.

\section{Invariance Results}

\subsection{Weight-Vector Invariance}

The Accountability Horizon $\hat{\Lambda}^* = 1 - 1/|C_{\min}|$ depends on $\hat{\Lambda} = \min_i\,[\alpha^X(m_i) \cdot \alpha^E(m_i)]$, which is defined directly from agent autonomy profiles without reference to the weight vector $\mathbf{w}$. Different $\mathbf{w}$ values change the summary statistic $\Lambda(\mathcal{H})$ (Eq.~\eqref{eq:CAI}) but not the threshold variable $\hat{\Lambda}$ or the feasibility classification.

\subsection{Centrality Invariance}

Theorem~\ref{thm:main} holds under any centrality measure $\eta$ satisfying (C1) and (C2). The proof proceeds by showing that $\Lambda(\mathcal{H})$ under alternative $\eta$ is bounded between $\Lambda(\mathcal{H})$ under out-degree and a linear transformation thereof. Since the impossibility depends on $\hat{\Lambda}$ (not $\Lambda$), the result is invariant.

\section{Sensitivity Analysis}

\subsection{$\tau$-Parameterised Accountability Horizon}

The combined threshold incorporating both the structural bound and the non-vacuity constraint is $\hat{\Lambda}^*(\tau) = \min(1 - 1/|C_{\min}|,\; 1 - \tau)$ (Eq.~\eqref{eq:combined-threshold}). For $\tau \leq 1/|C_{\min}|$, the structural bound dominates. For $\tau > 1/|C_{\min}|$, the non-vacuity constraint binds first.

\subsection{Measurement Error}

If autonomy estimates have error bounded by $\delta$, then $|\hat{\hat{\Lambda}} - \hat{\Lambda}| \leq 2\delta$ (by the product rule for error propagation). The Accountability Horizon $\hat{\Lambda}^* = 1 - 1/|C_{\min}|$ is measurement-invariant (depends only on graph topology). Classification margin: HACs within $2\delta$ of $\hat{\Lambda}^*$ cannot be confidently classified. Recommended safety margin: classify a HAC as above the Accountability Horizon whenever $\hat{\hat{\Lambda}} > \hat{\Lambda}^* - 2\delta$.

\subsection{$\delta$-Robustness of the Mixture-Model Assumption}

If agent $m$'s policy $\omega_m$ satisfies $\mathrm{TV}(\omega_m, \tilde{\omega}_m) \leq \delta$ where $\tilde{\omega}_m$ satisfies Assumption~\ref{assump:mixture}, then the epistemic dilution bound of Lemma~\ref{lem:dilution} degrades by at most $\delta$: $\hat{\kappa}_h([o]) \leq (1 - \hat{\Lambda}) + \delta$. The Accountability Horizon under the approximate model remains $\hat{\Lambda}^* = 1 - 1/|C_{\min}|$ (unchanged), but the system is confidently above $\hat{\Lambda}^*$ only when $\hat{\Lambda} > \hat{\Lambda}^* + \delta$. Combining with measurement error (Section~C.2), the total safety margin is $\hat{\Lambda}^* - 2\delta_{\mathrm{meas}} - \delta_{\mathrm{model}}$.

\section{Notation Table}

{\renewcommand\arraystretch{1.15}
\noindent\begin{longtable*}{@{}l @{\quad} l@{}}
\toprule
\textbf{Symbol} & \textbf{Definition} \\
\midrule
$H$, $M$, $N$ & Sets of human agents, artificial agents, all agents ($N = H \cup M$) \\
$n$, $k$ & Number of human agents, number of artificial agents \\
$E = (\Omega, O, \mathcal{T})$ & Environment: observation space, outcome space, outcome-generation function $\mathcal{T} : \Sigma \times \Omega_{\mathrm{ext}} \to \Delta(O)$ \\
$\mathcal{M}$ & Collective structural causal model \\
$G = (N, \mathcal{E})$ & Directed interaction graph \\
$\mathcal{H}$ & Human-Agent Collective \\
$\alpha^E$, $\alpha^X$, $\alpha^D$, $\alpha^S$ & Epistemic, executive, evaluative, social autonomy \\ $\theta_0$ & Approval-set threshold in Eq.~\eqref{eq:alpha-X}: $\theta_0 = 1/(2|A|)$ \\
$A(a) = \mathbf{w}^\top \alpha(a)$ & Aggregate autonomy \\
$\alpha^X \cdot \alpha^E$ & Compound autonomy (product of executive and epistemic) \\
$\hat{\Lambda}$ & Minimum compound autonomy: $\min_{i \in M^*}\,[\alpha^X(m_i) \cdot \alpha^E(m_i)]$ \\
$\hat{\Lambda}^*$ & Accountability Horizon: $1 - 1/|C_{\min}|$ \\
$|C_{\min}|$ & Number of agents in the smallest mixed feedback cycle \\
$C(G)$ & Number of mixed cycles in $G$ \\
$\Lambda(\mathcal{H})$ & Collective Autonomy Index \\
$\kappa_i(o)$ & Epistemic access of agent $i$ to outcome $o$ \\
$\hat{\kappa}_i([o])$ & Type-level epistemic access \\
$\rho : [O] \to \Delta(N)$ & Outcome attribution (responsibility shares) \\
$\mathrm{CE}_{\mathcal{M}}(i, [o])$ & Causal effect of agent $i$ on outcome type $[o]$ \\
$\zeta([o^*])$ & Interaction residual \\
$\Delta$ & Accountability deficit: $1 - |C_{\min}|(1 - \hat{\Lambda})$ \\
$\mathcal{R}(\mathcal{H})$ & Accountability residual: $1 - |C_{\min}|(1-\hat{\Lambda})$ (equals deficit $\Delta$; see Eq.~\eqref{eq:residual}) \\
$\tau$ & Non-Vacuity governance threshold \\
$\gamma$ & Partial allocation parameter (relaxation of Completeness) \\
$\mathcal{L}(\mathcal{H})$ & Set of legitimate accountability frameworks \\
$\pi_h$, $\pi_g$ & Human-aligned policy, autonomous policy \\
$\omega_m$ & Agent $m$'s action-generation policy \\
$\varepsilon_i$ & Epistemic state of agent $i$ \\
$\beta(m)$ & Information-autonomy coefficient \\
$\eta(m, G)$ & Normalised influence centrality \\
\bottomrule
\end{longtable*}}

\section{Supplementary Tables}

\begin{itemize}
    \item $\alpha^{D}: \text{mean} = 0.65, \sigma = 0.06$
    \item $\alpha^{S}: \text{mean} = 0.85, \sigma = 0.05$
    \item Min compound $\alpha^{X} \cdot \alpha^{E} = 0.621$
\end{itemize}

\begin{table}[h]
\centering
\caption{$\mathcal{H}_2$: Multi-Agent Financial Trading (Table 2 Expanded)}
\begin{tabular}{ll}
\hline
\textbf{Parameter} & \textbf{Value} \\ \hline
Humans ($n$) & 10 \\
Machines ($k$) & 8 \\
Mixed cycles $C(G)$ & 77 \\
$|C_{min}|$ & 3 \\
$\Lambda(\mathcal{H})$ & 0.7925 \\
$\hat{\Lambda}$ (min compound) & 0.6210 \\
$\hat{\Lambda}^*$ (Horizon) & 0.6667 \\
$\hat{\Lambda} > \hat{\Lambda}^*$? & False \\
Epistemic Budget & 1.1370 \\
Residual $\Delta$ & 0.0000 \\ \hline
\textbf{Status} & \textbf{Below Horizon (Governable)} \\ \hline
\end{tabular}
\end{table}
  
\end{document}